\documentclass{article} %
\usepackage{iclr2026_conference,times}

\usepackage{amsmath,amsfonts,bm}

\def\eqref#1{equation~\ref{#1}}

\def\1{\bm{1}}

\def\mZ{{\bm{Z}}}

\DeclareMathAlphabet{\mathsfit}{\encodingdefault}{\sfdefault}{m}{sl}
\SetMathAlphabet{\mathsfit}{bold}{\encodingdefault}{\sfdefault}{bx}{n}

\DeclareMathOperator*{\argmax}{arg\,max}

\usepackage{hyperref}
\usepackage{url}
\usepackage{graphicx}
\usepackage{booktabs}
\usepackage{multirow}
\usepackage{subcaption}
\usepackage{cleveref}
\newtheorem{proposition}{Proposition}
\newtheorem{assumption}{Assumption}
\usepackage{amssymb}
\usepackage{enumitem}
\usepackage{wrapfig}

\newcommand{\mypara}[2][\vspace{0.2em}]{#1\noindent\textbf{#2}}

\newcommand{\passof}[1]{\ensuremath{\mathrm{pass@}{#1}}}
\newcommand{\Passof}[1]{\ensuremath{\mathrm{Pass@}{#1}}}
\newcommand{\x}[0]{\ensuremath{\mathbf{x}}}
\newcommand{\y}[0]{\ensuremath{\mathbf{y}}}

\renewcommand{\mZ}[0]{\ensuremath{\mathcal{Z}}}
\newcommand{\yklist}[0]{\ensuremath{\y{}_1,\ldots,\y{}_k}}

\newcommand{\piof}[1]{\ensuremath{\pi_{\mathrm{#1}}^{(k)}}}
\newcommand{\outline}[0]{\ensuremath{\mathbf{o}}}

\newcommand{\oklist}[0]{\ensuremath{\outline{}_1,\ldots,\outline{}_k}}

\newcommand{\pip}[0]{\ensuremath{\pi_p}}
\newcommand{\pie}[0]{\ensuremath{\pi_e}}

\usepackage{soul}
\usepackage[table]{xcolor}
\usepackage{listings}

\lstdefinestyle{promptstyle}{
    basicstyle=\ttfamily\scriptsize,
    breaklines=true,
    breakindent=0pt,
    breakautoindent=false,
    breakatwhitespace=false,
    columns=fullflexible,
    keepspaces=true,
    frame=single,
    rulecolor=\color{black!25},
    backgroundcolor=\color{black!2},
    xleftmargin=6pt,
    xrightmargin=6pt,
    aboveskip=8pt,
    belowskip=8pt,
    showstringspaces=false
}

\title{Planned Test-Time Scaling with Coordinated Reasoning Paths}

\author{Xueqing Wu$^{1}$, Langxing Bai$^{1}$, Hritik Bansal$^{1}$, Po-Nien Kung$^{1}$, Shuo Li$^{2}$, Hao Liu$^{2}$, \\
\textbf{Nanyun Peng}$^{1}$, \textbf{Kai-Wei Chang}$^{1}$ \\
$^{1}$University of California, Los Angeles \quad $^{2}$Amazon \\
\url{https://github.com/shirley-wu/planned-test-time-scaling}
}

\iclrfinalcopy %
\begin{document}

\maketitle
\vspace{-1em}

\begin{abstract}
\vspace{-.5em}
Test-time scaling with parallel branches is widely adopted to improve performance on challenging reasoning tasks. The predominant approach, repeated sampling, draws branches independently from a single policy, which can produce redundant attempts and thereby limit the gains from additional inference compute.
To address this limitation, we propose \textbf{Planned Test-Time Scaling} (PTTS), which replaces independent sampling with a coordinated joint policy: a \textit{planner} generates a solution outline for each branch, steering the branches toward distinct reasoning paths, and an \textit{executor} produces a full solution conditioned on each outline.
Formally, we show that PTTS strictly generalizes repeated sampling and, in a stylized setting, provably promotes coverage of complementary reasoning modes and yields better \passof{k} scaling.
We instantiate PTTS on top of strong reasoning models, keeping them fixed as executors while replacing repeated sampling with  PTTS inference to further enhance test-time scaling.
Concretely, we develop two variants: \textbf{PTTS-ZS} prompts a model to jointly generate outlines for all branches in a single autoregressive pass, while \textbf{PTTS-RL} directly optimizes the planner against the \passof{k} reward using truncated execution rollouts for efficient training and a sharper reward signal.
Across five mathematical reasoning benchmarks with Qwen3-1.7B and 4B, PTTS-ZS improves \passof{64} over repeated sampling by up to 6.7 points, while PTTS-RL further increases the gain to up to 13.4 points. Further analysis indicates that broader coverage of distinct reasoning paths contributes to these gains.
Overall, PTTS provides a general framework for improving test-time scaling by coordinating reasoning branches, with zero-shot and trainable instantiations that yield substantial performance gains.

\end{abstract}
\vspace{-1em}

\begin{figure}[!h]
    \centering
    \includegraphics[width=\linewidth]{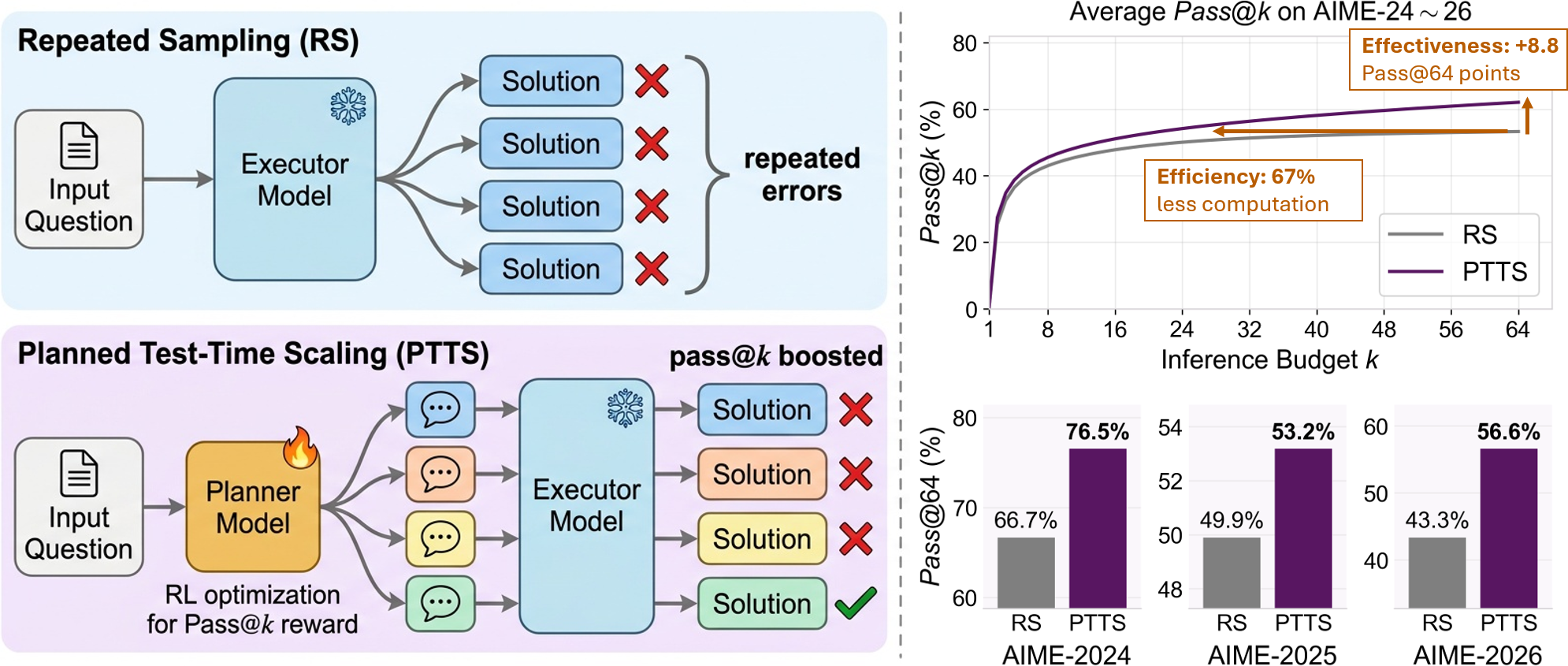}
    \vspace{-1.5em}
\caption{\small \textbf{Left: overview of planned test-time scaling (PTTS).} Unlike repeated sampling (RS) that draws $k$ independent solutions from a frozen executor and often repeats the same errors, PTTS uses a planner to generate $k$ diverse outlines and guide the executor, where the planner can be further trained via reinforcement learning to directly optimize \passof{k}. \\[4pt]
\textbf{Right: PTTS performance} with Qwen3-1.7B on AIME-2024--2026. PTTS matches the \passof{64} of RS with 67\% less computation (efficiency) and exceeds it by +8.8 points at $k\!=\!64$ (effectiveness).}
    \label{fig:teaser}
\end{figure}

\section{Introduction}

As large language models (LLMs) are applied to increasingly challenging reasoning tasks, even strong models often fail to solve a problem reliably in a single attempt. Test-time scaling addresses this limitation by allocating additional inference compute \citep{wu2024inference,snell2024scaling,brown2024large}, most commonly through \textit{repeated sampling} \citep{brown2024large}: given a budget of $k$ attempts, the model independently samples $k$ candidate solutions from a single policy, and performance is measured by \passof{k}, i.e., whether at least one attempt is correct.
The benefit of this additional compute therefore depends critically on \textbf{coverage}: additional attempts are useful only insofar as they explore complementary reasoning paths rather than repeat existing ones.
Under repeated sampling, however, independent samples from the same policy tend to concentrate around high-probability reasoning modes, producing redundant attempts.
Reinforcement learning (RL) can further amplify this effect, improving \passof{1} while narrowing the policy distribution and degrading \passof{k} \citep{yuedoes,cui2025entropy,wu2026invisibleleashrlvrescape}.
Unlocking the full value of parallel compute therefore requires explicit coordination that actively steers branches toward distinct reasoning paths rather than relying on stochasticity to provide coverage.

Recent work explicitly coordinates parallel attempts by first constructing diverse high-level reasoning plans through handcrafted pipelines---for example, iterative concept elicitation from an LLM \citep{handa2025guidedsampling} or over-generation followed by clustering of candidate plans \citep{yang2025explore}---and then using these plans to steer different reasoning branches. Their empirical gains suggest that explicit coordination via a planning stage is a promising direction. We formalize this shared structure by treating planning as a  joint policy across branches, enabling both theoretical analysis and direct optimization.

We introduce \textbf{Planned Test-Time Scaling} (PTTS), which factorizes the joint policy over $k$ reasoning branches into a planner-executor decomposition. As shown in Figure \ref{fig:teaser}, a \textit{planner} jointly produces $k$ solution outlines, one for each branch, and an \textit{executor} then generates a full solution for each branch conditioned on its  outline.
By construction, PTTS strictly generalizes repeated sampling and thus guarantees no worse achievable \passof{k}.
In a stylized setting of reasoning-mode selection, we further show that a marginal policy optimized for \passof{1} collapses onto a single dominant mode, causing repeated sampling to replay similar failures and bottlenecking test-time scaling on problems that benefit from complementary reasoning modes.
In contrast, PTTS can be optimized directly for \passof{k}, incentivizing the planner to coordinate branches across complementary reasoning modes and yielding strictly better \passof{k} scaling.

We instantiate PTTS in two complementary forms, PTTS-ZS and PTTS-RL. The zero-shot variant, \textbf{PTTS-ZS}, uses a strong reasoning model as the executor and a \textit{base} LLM as the planner, leveraging the base model's broader generation diversity. Unlike prior approaches that rely on multi-stage pipelines to construct diverse plans, PTTS-ZS generates all $k$ outlines jointly in a single autoregressive pass, yielding a simple yet effective design that naturally enables end-to-end optimization. Building on this formulation, \textbf{PTTS-RL} freezes the executor and directly optimizes the planner with the \passof{k} of the $k$ executor solutions as the reward. To reduce rollout cost and sharpen credit assignment to the planner, we truncate the executor's reasoning budget during training, preventing the executor from rescuing poor outlines through extended rethinking.

Evaluation on five mathematical reasoning benchmarks using Qwen3 models \citep{yang2025qwen3technicalreport} demonstrates substantial gains in both effectiveness and efficiency. PTTS-ZS consistently improves over repeated sampling, yielding up to a 6.7-point gain in \passof{64}, while optimizing the planner with PTTS-RL further increases this gain to as much as 13.4 points. Notably, PTTS-RL surpasses repeated sampling using only half the sampling budget, making it more than 2$\times$ as compute-efficient.
Further analysis points to improved coverage of the reasoning space: PTTS-ZS substantially increases overall reasoning diversity, while PTTS-RL further enhances diversity among branches that reach correct answers.
Finally, PTTS-RL planners can serve as flexible add-ons that transfer to executors unseen during training, including larger reasoning models (+4.4 \passof{64}) and executors trained with diversity-oriented RL (+4.4 \passof{64}), demonstrating that PTTS is complementary to executor-side improvements.

To summarize, our contributions are as follows: (1) We introduce Planned Test-Time Scaling (PTTS), a planner-executor framework that replaces independent repeated sampling with a coordinated joint policy over multiple reasoning branches. (2) We analyze PTTS theoretically, showing that it strictly generalizes repeated sampling and that directly optimizing for \passof{k} promotes coverage of complementary reasoning modes under fixed inference budgets. (3) We instantiate PTTS in zero-shot and RL-based variants, achieving gains of up to 13.4 points over repeated sampling across five mathematical reasoning benchmarks.

\section{Related Work}
\label{sec:related_work}

\mypara[]{Mode collapse in test-time scaling.}
Recent work increasingly uses test-time scaling to enhance off-the-shelf models by allocating additional inference compute \citep{snell2024scaling,wu2024inference,zhang2025survey,brown2024large}. A particularly simple and effective approach is repeated sampling, which independently draws multiple solutions from the same policy \citep{brown2024large}.
However, repeated sampling relies on independently sampled branches from the same policy, so reduced diversity makes additional samples redundant, limiting coverage and performance. This issue is especially pronounced after reinforcement learning with verifiable rewards (RLVR) \citep{shao2024deepseekmath,guo2025deepseek}, which can concentrate the model on a narrow distribution \citep{yuedoes,cui2025entropy,wu2026invisibleleashrlvrescape}, thereby harming \passof{k}.

\mypara{Diversity-aware test-time scaling.}
One line of work addresses this at inference time by coordinating the $k$ branches with a planning stage prior to problem solving. As our primary baseline, Guided Sampling \citep{handa2025guidedsampling} iteratively generates distinct solution concepts to guide the final generation; others sample and cluster high-level plans \citep{yang2025explore} or perturb the query before parallel attempts \citep{wang2025effectsamplingdiversityscaling}. While effective, these methods lack a unified formal framework that can be optimized end-to-end, which we aim to provide. Closely related to our approach, \citet{kang2025road} and \citet{li2026cast}  train planners to propose multiple strategies before solving, but mostly focus on synthetic or coding tasks and lack a joint formulation or theoretical perspective.

\mypara{Diversity-aware RL training.}
Another line of work redesigns RL training to promote exploration behavior and diversity in the marginal policy itself. Prior work directly optimizes estimates of \passof{k} for each branch \citep{tang2025optimizing,chen2025pass,walder2025pass}, maintains entropy to prevent mode collapse \citep{cui2025entropy,wang2026beyond}, promotes semantic diversity \citep{li2025jointly,yao2026diversity}, adopts risk-aware training objectives \citep{ren2025riskpo,jiang2025risk}, or designs structured curricula to encourage exploration \citep{setlur2026e3}. These methods focus on improving the marginal policy and are therefore complementary to our proposed test-time scaling algorithm. As shown in \S\ref{sec:exp:ablations}, our proposed PTTS can be combined with this line of work to further improve performance.

\section{Planned Test-Time Scaling}
\label{sec:method}

This section presents our planned test-time scaling (PTTS) framework and instantiations. We begin with the  problem formulation (\S\ref{sec:method:ptts}), then provide two theoretical perspectives on why PTTS improves \passof{k} (\S\ref{sec:method:advantage}), and finally describe the concrete algorithm (\S\ref{sec:method:method}).

\subsection{From Repeated Sampling to PTTS}
\label{sec:method:ptts}

Consider a verifiable reasoning problem $\x{}$, and let $r(\y{};\x{}) \in \{0,1\}$ indicate whether a candidate solution $\y{}$ is correct. Given a test-time budget of $k$ attempts, we measure success using \passof{k}:
{\small\begin{align*}
    \passof{k}\left(\yklist{};\x{}\right)
    \triangleq
    \max_{i\in\{1,\ldots,k\}} r(\y_i;\x{}).
\end{align*}}

\begin{figure*}[!tbph]
     \centering
     \begin{subfigure}[b]{0.55\textwidth}
         \centering
\includegraphics[width=\textwidth]{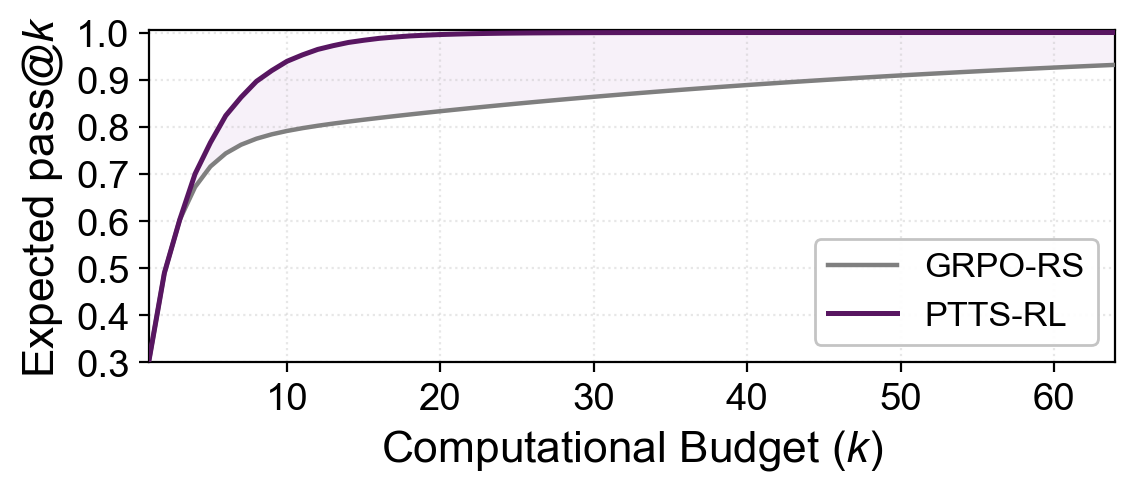}
     \vspace{-1.5em}
         \caption{\small Comparing the \passof{k}  of PTTS-RL against repeated sampling based on GRPO (GRPO-RS).}
         \label{fig:mode_collapse_visualization:1}
     \end{subfigure}
     \hfill
     \begin{subfigure}[b]{0.42\textwidth}
         \centering
\includegraphics[width=\textwidth]{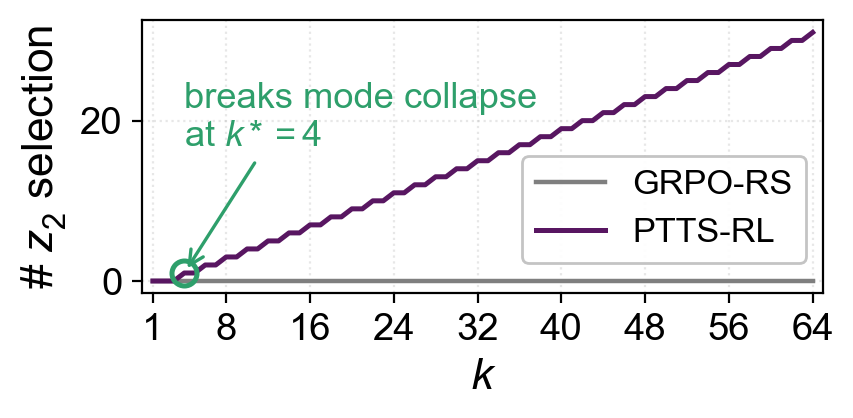}
     \vspace{-1.7em}
\caption{\small Inference budgets (out of $k$) allocated to the non-dominant mode $z_2$, with larger value representing higher diversity.}
         \label{fig:mode_collapse_visualization:2}
     \end{subfigure}
     \vspace{-.4em}
     \caption{\small\textbf{\Passof{k} performance and policy diversity in a synthetic setup} with two data categories and two reasoning modes ($z_1$ being dominant and $z_2$ being non-dominant), as detailed in Appendix \ref{app:math:eg}.}
\label{fig:mode_collapse_visualization}
\vspace{-.8em}
\end{figure*}

\mypara[]{Repeated sampling (RS).}
As the predominant test-time scaling approach, RS draws the $k$ attempts independently from a single base policy $\pi$, inducing the joint distribution:
{\small\begin{align*}
    (\yklist{}) \sim \piof{rs}(\cdot \mid \x{}),
    \qquad
    \piof{rs}(\yklist{} \mid \x{})
    \triangleq
    \prod_{i=1}^k \pi(\y_i \mid \x{}).
\end{align*}}
RS is simple and broadly applicable, but it does not coordinate branches or allocate the budget across complementary reasoning paths. As a result, additional samples may cluster around similar high-probability solutions and repeat the same mistakes.

\mypara{Planned test-time scaling (PTTS).}
PTTS instead treats the $k$ attempts as a structured test-time computation rather than independent samples. Ideally, one would model a joint policy directly over $(\yklist{})\sim \piof{ptts}(\cdot \mid \x{})$.
However, directly modeling an unrestricted joint distribution over $(\yklist{})$ is difficult: the space of solution tuples grows rapidly with $k$, and the policy must capture both solution quality and how attempts should differ.

PTTS therefore uses a \textbf{planner-executor} factorization. The \textit{planner} first produces a shared plan consisting of $k$ solution outlines, $\outline{} = (\oklist{})\sim \pip(\cdot \mid \x{})$, where each outline steers one attempt toward a distinct reasoning path. The \textit{executor} then generates a full solution conditioned on the problem and the corresponding outline: $\y_i \sim \pie(\cdot \mid \x{}, \outline{}_i)$. Equivalently, PTTS defines the following joint distribution over the generated plan and solutions:
{\small\begin{align*}
    \piof{ptts}(\outline{}, \yklist{} \mid \x{})
    \triangleq
    \pip(\outline{} \mid \x{})
    \prod_{i=1}^k
    \pie(\y_i \mid \x{}, \outline{}_i).
\end{align*}}
This factorization makes PTTS tractable while preserving branch-level coordination: the planner chooses complementary high-level reasoning paths, and the executor realizes each path independently. In \S\ref{sec:method:advantage}, we analyze why this structure can improve \passof{k} over RS.

\subsection{Why PTTS Improves Pass@$k$}
\label{sec:method:advantage}

\mypara[]{Expressiveness.}
From a policy-class perspective, $\piof{ptts}$ strictly generalizes $\piof{rs}$: when the planner emits an empty or constant outline for every branch, PTTS reduces to RS. Therefore, the optimal \passof{k} attainable within PTTS is no lower than that attainable within RS.

This gap can be made \textit{strict} under a per-attempt success constraint. Suppose each branch has fixed marginal success probability $p\triangleq\mathbb{E}\left[r(\y_i;\x{})\right]$. For RS, independence gives $\passof{k}=1-(1-p)^k$. In contrast, PTTS can coordinate branches so that their success events are nearly disjoint, for example by assigning complementary strategies. In the ideal disjoint case, $\passof{k}=\min(kp,1)$, which is \textit{strictly larger} than RS for $p\in(0,1)$ and $k>1$. Thus, PTTS converts the same average per-attempt competence into a higher joint pass rate by reducing redundancy across attempts.

\definecolor{outlinegold}{HTML}{BC8310}

\begin{figure}[!tbph]
    \centering
    \includegraphics[width=\linewidth]{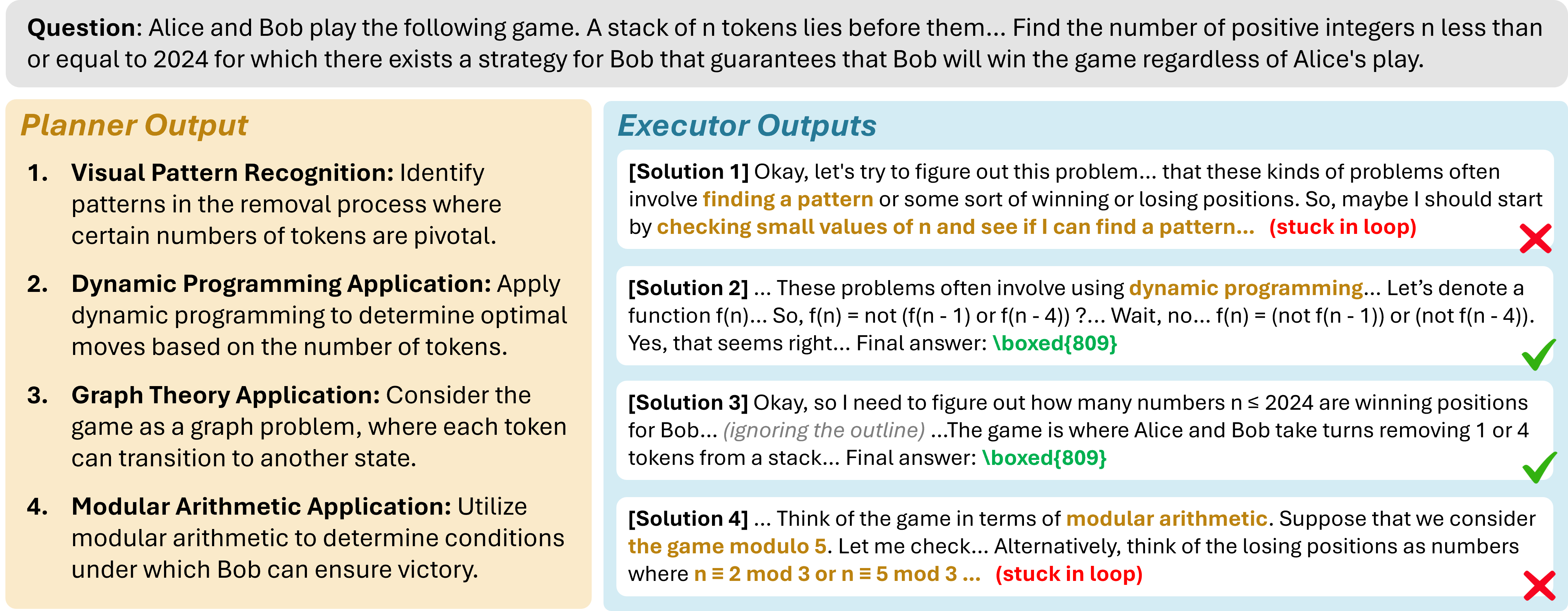}
    \vspace{-1.5em}
    \caption{\small \textbf{Example output from PTTS-ZS.} \textbf{Left:} a markdown list of four distinct outlines generated by the planner $\pi_p$ in a single autoregressive pass. \textbf{Right:} solutions generated by the executor $\pi_e$ conditioned on each outline, with the outline-following portions \textbf{\textcolor{outlinegold}{highlighted}}. Most solutions faithfully follow their outlines, steering the reasoning toward different modes and boosting the diversity.}
    \label{fig:output_example}
\vspace{-1em}
\end{figure}

\mypara{Mode coverage analysis.}
We then take a dataset-level optimization perspective to illustrate why standard paradigms such as GRPO suffer from \textit{mode collapse}, and how PTTS-RL naturally mitigates this issue. Optimizing dataset-level \passof{1} encourages the policy to concentrate on a \textit{dominant reasoning mode}. While this maximizes average single-shot reward, it inherently sacrifices diversity and can ultimately degrade \passof{k} \citep{yuedoes}.

To formalize this intuition, we study a simplified setting in which the executor is fixed and imperfect, reducing the reasoning process to a mode-selection problem. Assume each example belongs to a latent category $c \in \mathcal{C}$, and the policy selects a reasoning mode $z \in \mathcal{Z}$, with success rate $R_{zc}$ determined only by the selected mode $z$ and data category $c$. We consider three realistic assumptions, with the full derivation deferred to Appendix \ref{app:math}:
\begin{enumerate}[nosep,leftmargin=*]
\item \textit{Imperfect policy:} The policy cannot observe the true category $c$ perfectly. Instead, it acts based on a noisy observed category $c'$, making perfect deterministic mode-matching impossible.
\item \textit{Mode dominance:} For a certain observed category $c'$, there exists a single dominant mode $z^*(c')$ that achieves a strictly higher expected success rate than any other mode.
\item \textit{Complementary modes:} The dominant mode $z^*$ struggles on a ``worst-case'' category $c^\dagger$ by a strict margin. Crucially, this worst-case category benefits more from a non-dominant mode.
\end{enumerate}

Under these conditions, optimizing the marginal policy for \passof{1} via GRPO drives fundamentally different behavior than optimizing the joint policy against \passof{k} via PTTS-RL. Since GRPO maximizes expected average return, it assigns all probability mass to the dominant mode $z^*(c')$, leading directly to mode collapse. In contrast, PTTS-RL optimizes directly for \passof{k}, naturally accounting for the \textit{diminishing marginal returns} of repeatedly sampling $z^*$. We show that beyond a finite threshold $k^*$, \textbf{PTTS-RL strategically allocates reasoning slots to non-dominant modes}, thereby rescuing performance on the worst-case category $c^\dagger$. Consequently, these policies exhibit distinct scaling dynamics: PTTS-RL drives the residual error $\epsilon = 1-\passof{k}$ to zero exponentially as $k\to\infty$, whereas GRPO's 
\passof{k} plateaus whenever the dominant mode has zero accuracy on the worst-case category. Even when GRPO does not plateau, \textbf{PTTS-RL shrinks the residual error at a strictly faster exponential rate} by actively covering blind spots rather than repeatedly resampling the dominant mode.

To illustrate these theoretical mechanics, Figure \ref{fig:mode_collapse_visualization} instantiates this model in a minimal two-category, two-mode environment where $z_1$ is dominant. The numerical results confirm our intuition: PTTS-RL actively breaks mode collapse, substantially improving coverage of the non-dominant class (Figure \ref{fig:mode_collapse_visualization:2}) and accelerating \passof{k} improvements after the $k^*$ threshold is crossed and before test-time scaling saturates (Figure \ref{fig:mode_collapse_visualization:1}).

\subsection{Instantiation}
\label{sec:method:method}

\mypara[]{PTTS-ZS.} In the zero-shot setting, we instantiate both the planner $\pi_p$ and the executor $\pi_e$ with LLMs.
As shown in Figure \ref{fig:output_example}, given a problem $\x{}$, the planner generates a markdown list of $k$ solution outlines in a single autoregressive pass, which is then deterministically parsed into $k$ outlines $(\oklist{})$. The executor then independently generates a full solution $\y_i \sim \pie(\cdot \mid \x{}, \outline_i)$ for each outline.
Concretely, we use strong reasoning models such as Qwen3 \citep{yang2025qwen3technicalreport} as executors, making PTTS-ZS an inference-only add-on to further enhance the test-time scaling. However, we find that reasoning models are poorly suited as planners: they tend to commit early to a single strategy and start solving the problem, rather than generating multiple diverse, high-level solution outlines. Therefore, we use the corresponding base models as planners.
Despite its simplicity, PTTS-ZS yields surprisingly large improvements over \passof{k}.

\mypara{PTTS-RL.} We then present PTTS-RL, which trains the planner end-to-end with GRPO \citep{shao2024deepseekmath,yu2026dapo} to directly optimize \passof{k}.

In GRPO, for each input \x{}, we sample $G$ outputs $\{\y{}^{(i)}\}_{i=1}^G$ from the model and obtain a reward $R_i$ for each output. We then compute the advantage $A_i$ by normalizing the rewards within the group: $A_i = \Big( R_i - \text{mean}(\{R_j\}_{j=1}^G) \Big) ~/~ \text{std}(\{R_j\}_{j=1}^G)$. This same advantage is assigned to every token in the corresponding output $\y{}^{(i)}$, i.e., $A_{i,t}=A_i$. The resulting training objective is:
{\small\begin{align*}
\mathcal{J}_\text{GRPO} = \mathbb{E} \left[ \frac{1}{G} \sum_{i=1}^G \frac{1}{|\y{}^{(i)}|} \sum_{t=1}^{|\y{}^{(i)}|} \min\Big( \rho_{i,t} A_{i,t} ,~ \text{clip}(\rho_{i,t} , 1-\epsilon_L, 1+\epsilon_H) A_{i,t} \Big) \right],
\end{align*}}
where $\rho_{i,t} = \pi_\theta\left(\y{}^{(i)}_t|~\x{},{\y{}}^{(i)}_{<t}\right) ~/~ \pi_{\theta_{old}}\left(\y{}^{(i)}_t|~\x{},{\y{}}^{(i)}_{<t}\right)$ is the importance sampling term.

PTTS-RL applies GRPO to the planner model $\pi_p$, where each sampled output is a list of outlines $\outline = (\oklist{})$. The reward for a sampled output is computed using a fixed executor $\pi_e$, which generates a solution $\y_i$ for each of the $k$ outlines, $i=1,\ldots,k$. Each generated solution is evaluated for accuracy as $r(\y_i \mid \x{})\in\{0,1\}$, yielding the \passof{k} reward:
{\small\begin{align*}
     R\left(\outline{}\right) = \max_{i=1,\ldots,k} r(\y_i\mid\x{}),\quad\y_i \sim \pi_e(\cdot\mid\x{},\outline_i).
\end{align*}}
However, generating full-length solutions for all $k$ branches makes training computationally expensive. Moreover, long execution traces allow the executor to drift away from the outline and achieve the correct answer through extensive rethinking, assigning positive reward to a poor outline and weakening the training signal. We therefore adopt a \textit{truncated execution} strategy during training, limiting executor outputs to fewer tokens than at inference ($4k$ vs. $10k$ tokens, as detailed in \S\ref{sec:exp:settings}) to improve efficiency and sharpen the reward signal.

\section{Experiments}

We empirically study three questions: (1) How much does the PTTS framework improve \passof{k}? (2) Which output properties drive these gains, and how do PTTS-ZS and PTTS-RL shape them? (3) Which PTTS design choices and hyperparameters most affect PTTS \passof{k} performance?
In this section, we describe the experimental setup in \S\ref{sec:exp:settings}, address (1)--(2) through the  results and analyses in \S\ref{sec:exp:main}, and study (3) through ablations in \S\ref{sec:exp:ablations}.

\subsection{Setup}
\label{sec:exp:settings}

\mypara[]{Evaluation settings.}
We evaluate on five benchmarks: MATH-500 \citep{hendrycks2measuring}, AIME 2024, 2025, and 2026 \citep{aops_aime_problems_solutions}, and HMMT-Feb26 \citep{dekoninck2026matharena}.
We report \passof{k} for $k$ up to 64. For each problem, we generate $n=64$ solutions, with a budget of $10k$ tokens per solution. We then analytically compute \passof{k} as the expected maximum reward among $k$ responses sampled from the set of $n$ generated responses:
{\small\begin{align*}
    \passof{k} = \mathbb{E} \left[1 -  \binom{n-c}{k} ~/~ \binom{n}{k} \right], \quad c = \sum_{i=1}^n\mathbf{1}\left(r(\y{}_i\mid\x{}) = 1\right).
\end{align*}}

\mypara[]{Models and baselines.} We compare PTTS-ZS and PTTS-RL against two baselines: (1)  repeated sampling, and (2) Guided Sampling \citep{handa2025guidedsampling}, an inference-time method similar to PTTS-ZS that first generates diverse \textit{concepts} and then uses them to guide reasoning.
We conduct experiments with Qwen3 models at 1.7B and 4B scales. For all methods, we use the reasoning model \texttt{Qwen3-1.7B/4B} as the executor. For zero-shot methods that involve planning (PTTS-ZS and Guided Sampling), we use the corresponding base model, \texttt{Qwen3-1.7B/4B-Base}, as the planner. For Guided Sampling, we follow the recommended setting of generating 5 ideas. For PTTS-RL, we train the planner starting from the base model, as discussed below.

\begin{figure}[!tbh]
    \centering
        \centering
        \includegraphics[width=\linewidth]{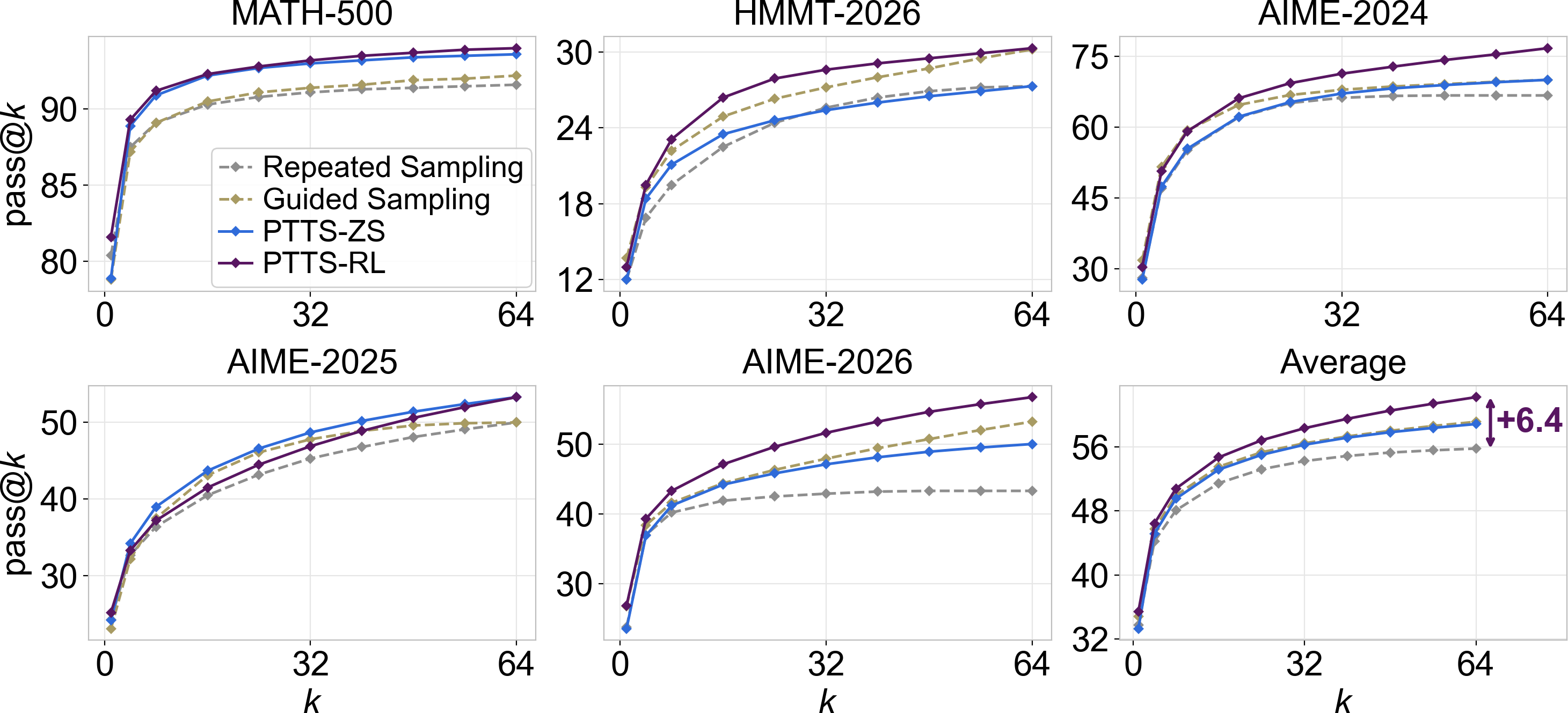}
        \vspace{-1.8em}
        \caption{\small \textbf{Main results for Qwen3-1.7B.} PTTS-RL improves \passof{64} over repeated sampling by up to 13.4 (6.4 on average), while its \passof{32} already exceeds repeated sampling's \passof{64} (58.3 vs. 55.8).}
        \label{fig:main_1.7b}
\vspace{-1.6em}
\end{figure}

\mypara{Training settings.} We train the planner from the base models using the DAPO training set \citep{yu2026dapo}, optimizing for the \passof{k} ($k=4$) reward. As in \S\ref{sec:method:method}, we use a truncated execution budget of $4k$ tokens for each solution. We train the \texttt{Qwen3-1.7B/4B-Base} models with a learning rate of 1e-6 for 240 steps and use the final checkpoint; detailed hyperparameter settings are provided in Appendix \ref{app:training-details}.

\mypara{Scaling PTTS to larger test-time budgets.} In both inference (PTTS-ZS and PTTS-RL) and training (PTTS-RL), we use a branching factor of $k=4$. To evaluate larger total budgets $K>k$, we independently repeat the PTTS procedure until we collect $K$ executor solutions in total, and then compute \passof{K} over the aggregated set. For PTTS-RL, this allows us to keep the same planner while seamlessly scaling to larger inference budgets. As discussed in \S\ref{sec:exp:ablations}, further increasing $k$ yields diminishing returns; therefore, we use $k=4$ as a computationally efficient setting to demonstrate our method.

\subsection{Main Results and Discussions}
\label{sec:exp:main}

\mypara[]{Main results.}
Figures~\ref{fig:main_1.7b} and~\ref{fig:main_4b} present our main results for the 1.7B and 4B models, respectively. Despite its simple design, PTTS-ZS consistently outperforms repeated sampling, with \passof{64} gains of up to 6.7 for 1.7B models and 3.4 for 4B models. PTTS-RL further improves upon PTTS-ZS and consistently outperforms both repeated sampling and Guided Sampling. Specifically, compared to repeated sampling, PTTS-RL improves \passof{64} by up to 13.4 for 1.7B models and 6.7 for 4B models. Beyond performance gains, improved compute efficiency is a broader benefit of PTTS. Most notably, at both model scales, PTTS-RL's \passof{32} exceeds repeated sampling's \passof{64}, achieving better performance with only half the sampling budget. Overall, these results demonstrate the benefit of coordinating test-time compute across complementary reasoning paths.

\mypara{Diversity as an indicator of $\mathbf{pass@k}$.}
To better understand what drives these gains, we examine whether broader coverage of reasoning strategies is associated with better \passof{k} performance. Concretely, we prompt GPT-5-mini to group outlines and solutions into clusters and report the number of unique clusters as the diversity metric, as detailed in Appendix \ref{app:concept-clustering}.
We then use Qwen3-1.7B's PTTS-ZS outputs on AIME-2024--2026 to analyze the rank-averaged correlations among outline diversity, solution diversity, and \passof{4} across groups of $k=4$ reasoning branches.
Results show that greater outline diversity is associated with higher solution diversity (Spearman's $\rho = 0.57$) and better resulting \passof{4} ($\rho = 0.59$), while solution diversity is also positively correlated with \passof{4} ($\rho = 0.32$).
Overall, these results support diversity as a meaningful indicator of \passof{k} performance across both the outline and solution levels.

\begin{figure}[!tbh]
    \centering
        \centering
        \includegraphics[width=\linewidth]{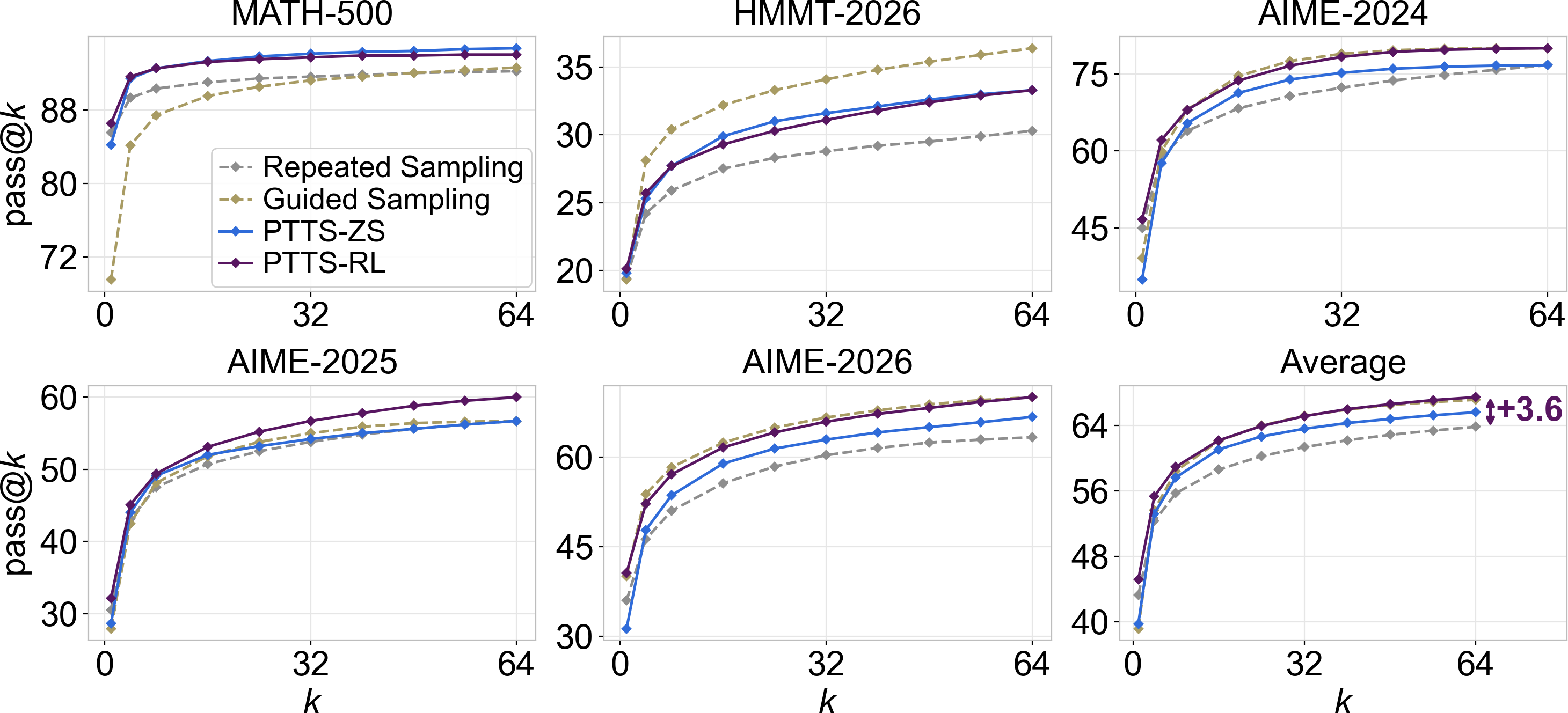}
        \vspace{-1.8em}
        \caption{\small \textbf{Main results for Qwen3-4B.} PTTS-RL improves \passof{64} over repeated sampling by up to 6.7 (3.6 on average), while its \passof{32} already exceeds repeated sampling's \passof{64} (65.1 vs. 63.8).}
        \label{fig:main_4b}
\vspace{-.5em}
\end{figure}

\begin{figure}[!tbhp]
    \centering
    \includegraphics[width=\linewidth]{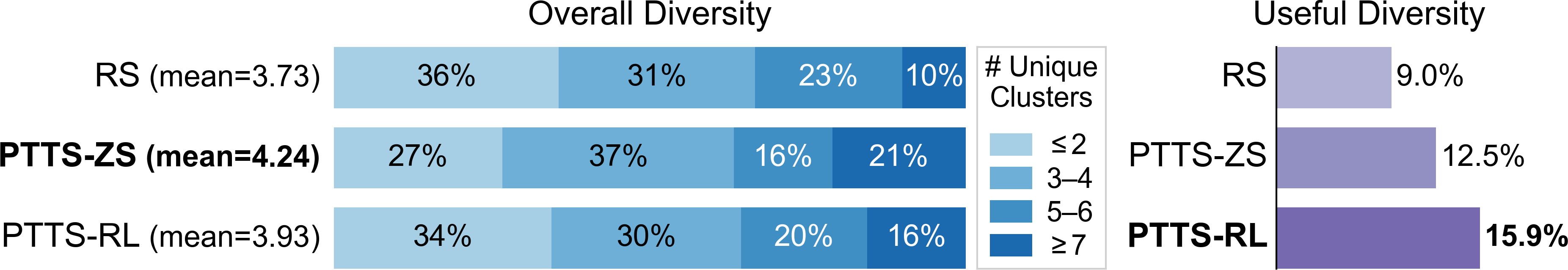}
    \vspace{-1em}
    \caption{\small\textbf{Solution diversity for repeated sampling (RS), PTTS-ZS, and PTTS-RL}, measured on Qwen3-1.7B outputs for AIME 2024--2026. \textbf{Left:} Overall diversity, measured by the number of distinct solution clusters among $k\!=\!64$ branches (Appendix \ref{app:concept-clustering}). Bars show the fraction of problems in each diversity range, with parentheses indicating the mean number of distinct clusters. \textbf{Right:} Useful diversity, measured as the fraction of correct solutions belonging to distinct solution clusters (Appendix \ref{app:diversity-metrics}).}
    \vspace{-1em}
    \label{fig:avg_diversity}
\end{figure}

\mypara{Diversity induced by PTTS.} We next investigate whether PTTS effectively enhances diversity. Using the same diversity measure discussed above, we compare the diversity of repeated sampling, PTTS-ZS, and PTTS-RL on Qwen3-1.7B outputs on the AIME-2024--2026 datasets.
As shown in Figure~\ref{fig:avg_diversity}, both PTTS variants increase overall response diversity over repeated sampling, with PTTS-ZS achieving the highest diversity across generated solutions.
Optimized for the \passof{k} objective, PTTS-RL does not further increase overall diversity over PTTS-ZS, but instead steers diversity toward branches that reach correct answers.
Measuring this \textit{useful diversity} as the normalized count of distinct clusters among correct branches (detailed in Appendix~\ref{app:diversity-metrics}), PTTS-RL achieves the highest \textit{useful diversity} of 15.9\%, outperforming both repeated sampling and PTTS-ZS.
Figure~\ref{fig:qualitative_diversity} further shows a qualitative example of the increased diversity under PTTS.
Overall, PTTS not only increases reasoning diversity but also steers it toward successful reasoning paths,  thereby improving \passof{k}.

\mypara{Outline adherence of the executor.} Beyond the planner, PTTS also relies on the executor to faithfully develop each proposed outline into a full solution. Reasoning models, however, are not explicitly trained for outline-guided generation, making it unclear how closely their solutions adhere to the provided outlines.
We evaluate outline adherence via LLM-as-a-judge using GPT-5-mini, as detailed in Appendix~\ref{app:outline-validity-following}. Analysis of PTTS-ZS and PTTS-RL outputs on the AIME-2024--2026 datasets shows that Qwen3-1.7B and Qwen3-4B follow outlines reasonably well, achieving adherence rates of 70\% and 71\%, respectively.
Interestingly, qualitative observations show that while executors typically begin by either strictly following or reiterating the outline verbatim, they may pivot to alternative strategies if the initial approach fails, with an example shown in Figure \ref{fig:qualitative_adherence_drift}. This may dilute the outline's influence on final performance as reasoning traces become longer, thereby affecting the training signal, as discussed in \S\ref{sec:exp:ablations}.

\subsection{Ablation Studies}
\label{sec:exp:ablations}

\mypara[]{Ablations on the branching factor $k$.}
We study the effect of the PTTS branching factor $k$ while fixing the total inference budget at $K=64$, following the scaling procedure in \S\ref{sec:exp:settings}.
As shown in Figure~\ref{fig:qwen3_k_ablation_pass64}, moving from repeated sampling ($k=1$) to modest branching factors substantially improves PTTS-ZS, with performance peaking at $k=4$ for Qwen3-1.7B and $k=8$ for Qwen3-4B. Beyond these points, increasing $k$ yields no consistent \passof{64} gains, with performance fluctuating at larger values. Since larger $k$ provides no reliable benefit while substantially increasing PTTS-RL training cost, we use $k=4$ throughout our main experiments.

\begin{figure}[!tbhp]
    \centering

    \begin{minipage}[t]{0.62\textwidth}
        \vspace{0pt}
        \centering
        \includegraphics[width=\linewidth]{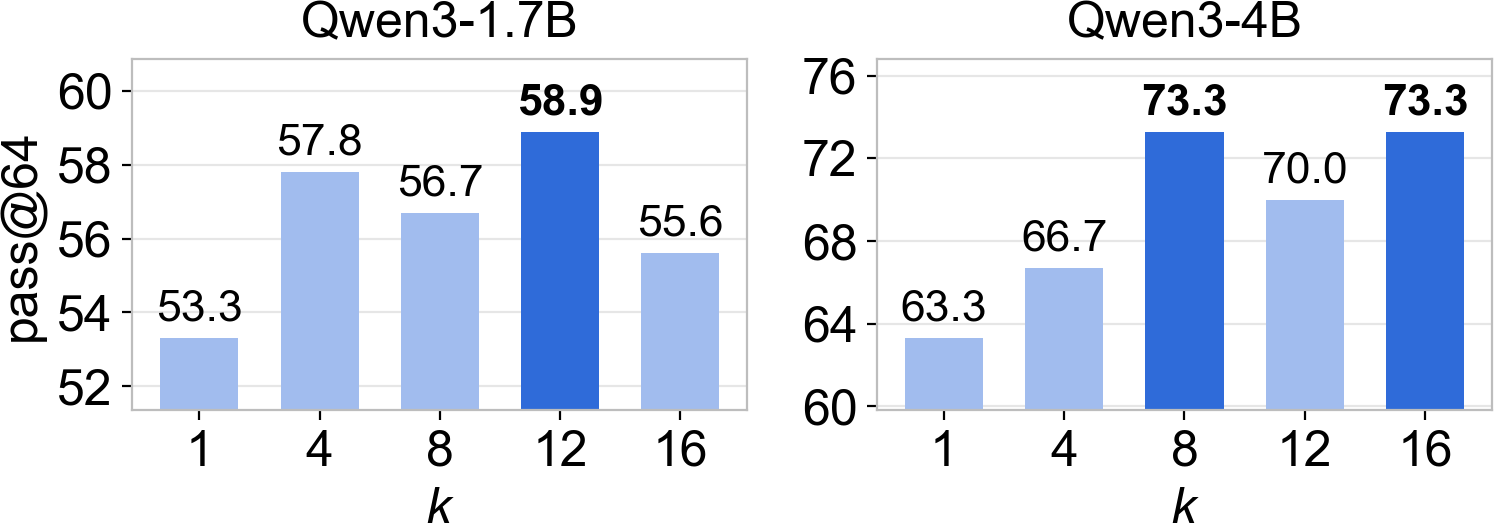}
        \vspace{-1.6em}
        \caption{\small \Passof{64} performance of PTTS-ZS across branching factors $k$ at a fixed inference budget of $K=64$, averaged across AIME 2024--2026; $k=1$ denotes repeated sampling.}
        \label{fig:qwen3_k_ablation_pass64}
    \end{minipage}
    \hfill
    \begin{minipage}[t]{0.35\textwidth}
        \vspace{0pt}
        \centering
        \includegraphics[width=\linewidth]{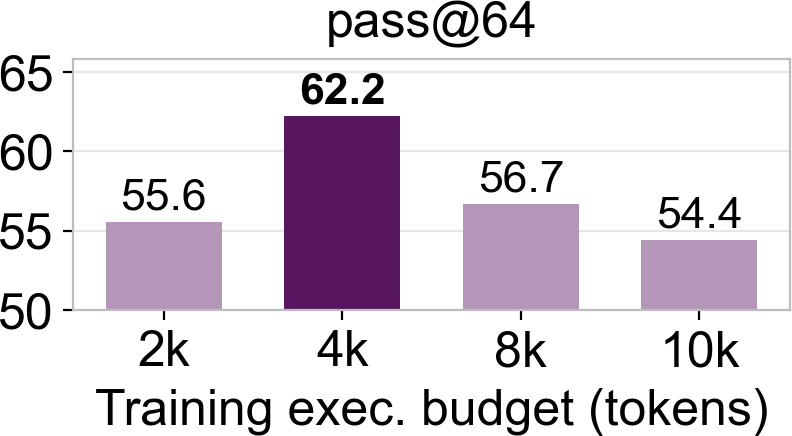}
        \vspace{-1.6em}
        \caption{\small \Passof{64} performance of PTTS-RL using Qwen3-1.7B trained with varying execution budgets, averaged across AIME 2024--2026.}
        \label{fig:ablation_trunc_exec}
    \end{minipage}
    \vspace{-1.2em}
\end{figure}

\mypara{Ablations on truncated execution.} To evaluate the impact of training-time execution limits, we train Qwen3-1.7B with PTTS-RL using execution budgets from $2k$ to $10k$ tokens, while fixing the inference budget at $10k$. As shown in Figure~\ref{fig:ablation_trunc_exec}, performance peaks at $4k$ and degrades with larger budgets: matching the $10k$ inference budget lowers average \passof{64} by 7.8 points and throughput by 27\%. Longer executions allow the executor to recover from poor outlines through extended rethinking, causing these outlines to receive positive reward and weakening the training signal. Conversely, an overly strict $2k$ budget prematurely truncates valid reasoning, making the reward 
\begin{wraptable}{r}{0.44\textwidth}
      \centering
      \small
      \begin{tabular}{l@{\hspace{8pt}}l|rrrrrr}
          \hline
  \multirow{2}{*}{Algorithm} & \multirow{2}{*}{Planner} & \multicolumn{3}{c}{\Passof{k}} \\
           & & $k\!\!=\!\!1$ & $k\!\!=\!\!4$ & $k\!\!=\!\!64$ \\
          \hline
  \multicolumn{3}{l}{\textit{Qwen3-1.7B as executor}} & \\
  \hline
          RS & - & 25.4 & 38.9 & 53.3 \\
          PTTS-ZS & 1.7B  & 25.2 & 39.5 & 57.8 \\
       \rowcolor{gray!12}   PTTS-ZS & 4B  & 25.6 & 40.4 & 60.0 \\
          PTTS-RL & 1.7B  & \textbf{27.5} & 41.1 & \textbf{62.2} \\
      \rowcolor{gray!12}    PTTS-RL & 4B  & \textbf{27.5} & \textbf{41.3} & 61.1 \\
          \hline
        \multicolumn{3}{l}{\textit{Qwen3-4B as executor}} & \\
  \hline
          RS & - &  37.2 & 49.4 & 65.6 \\
       \rowcolor{gray!12}      PTTS-ZS & 1.7B  & 36.8 & 52.0 & 68.9 \\
          PTTS-ZS & 4B   & 31.6 & 49.8 & 66.7 \\
      \rowcolor{gray!12}      PTTS-RL & 1.7B  & \textbf{42.1} & \textbf{54.7} & \textbf{70.0} \\
       PTTS-RL & 4B & 39.8 & 53.1 & \textbf{70.0} \\
          \hline
      \end{tabular}
      \vspace{-.5em}
      \caption{\small \Passof{k} performance of repeated sampling (RS), PTTS-ZS, and PTTS-RL under different planner and executor configurations.
{\sethlcolor{gray!12}\hl{Highlighted rows}} indicate configurations where the planner and executor differ in size.}
      \label{tab:diff_planner_executor}
      \vspace{-2em}
\end{wraptable}  uninformative and substantially hurting performance. Overall, a $4k$ budget yields the optimal balance, providing enough length to reliably execute outlines and actively truncating unguided rethinking.

\mypara{Ablations on planner and executor sizes.}
While our main experiments use planners and executors of the same sizes, decoupling them reveals distinct scaling behaviors. As shown in Table \ref{tab:diff_planner_executor}, 1.7B planners effectively guide a larger Qwen3-4B executor, yielding significant gains over repeated sampling and matching the performance of 4B planners, despite a train-test mismatch: the 1.7B PTTS-RL planner is trained with the smaller Qwen3-1.7B executor. In contrast, using 4B planners to guide a Qwen3-1.7B executor provides no additional benefit over 1.7B planners. These results suggest a promising strategy of training small but capable planners alongside small executors, then scaling only the executor at inference time.

\begin{wrapfigure}{r}{0.33\textwidth}
    \centering
    \vspace{-1.5em}
    \includegraphics[width=\linewidth]{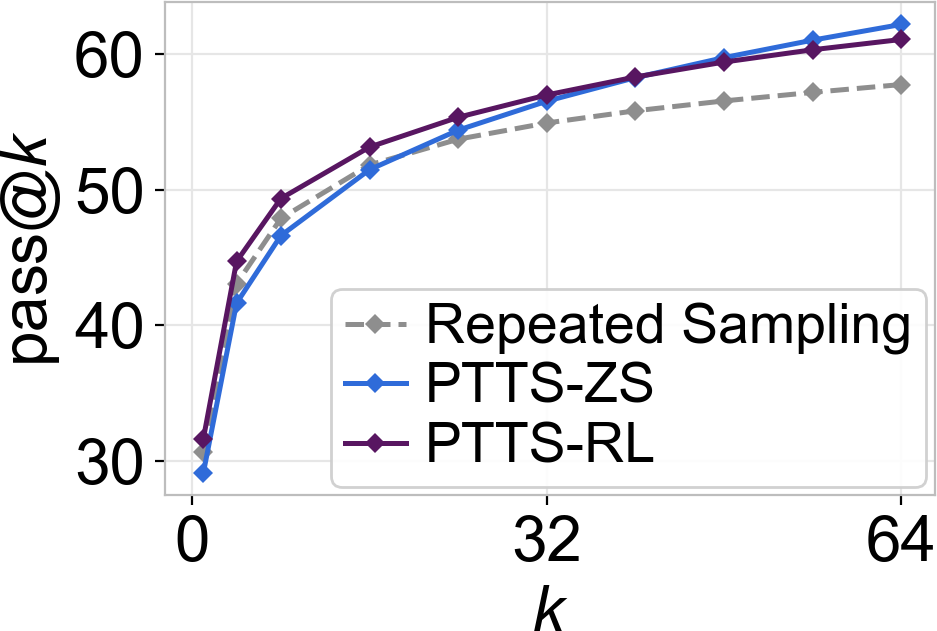}
    \vspace{-1.7em}
    \caption{\small \Passof{k}  of repeated sampling, PTTS-ZS, and PTTS-RL with \texttt{e3-1.7B} as the executor.}
    \label{fig:e3_1.7b}
    \vspace{-1.5em}
\end{wrapfigure}
\mypara{Combination with diversity-aware RL.} As discussed in \S\ref{sec:related_work}, PTTS operates at inference time and is complementary to training-side diversity-aware improvements. We demonstrate this by using e3 \citep{setlur2026e3} as the executor, which is trained to improve exploration through a structured curriculum. With \texttt{e3-1.7B} as the executor, both the zero-shot 1.7B planner and the RL planner trained with a Qwen3-1.7B executor outperform repeated sampling by 4.4 and 3.3 points, respectively. These gains show that PTTS remains effective on top of a diversity-aware RL executor, providing an additional and complementary source of improvement.

\section{Conclusion}

We introduce Planned Test-Time Scaling (PTTS), a framework that addresses the redundant reasoning and repeated errors of standard repeated sampling by explicitly coordinating attempts across $k$ different branches. PTTS decomposes test-time scaling under a budget of $k$ reasoning branches into a planner stage that produces diverse solution outlines and an executor stage that follows each outline. We instantiate PTTS with a zero-shot variant, PTTS-ZS, which plans using an off-the-shelf model, and a trained variant, PTTS-RL, which directly optimizes the planner for the pass@$k$ reward. We show theoretically and empirically that PTTS outperforms independent repeated sampling: across five mathematical datasets, both PTTS-ZS and PTTS-RL consistently yield gains of up to 13.4 points. Overall, PTTS provides a principled paradigm for test-time scaling, highlighting explicit coordination and budget allocation as key to better unlocking its potential.

\bibliography{iclr2026_conference}
\bibliographystyle{iclr2026_conference}

\appendix

\section{Proofs of Propositions}
\label{app:math}

\newtheorem{lemma}[proposition]{Lemma}   %
\newenvironment{proof}[1][Proof]{\noindent\textit{#1.}~}{\hfill$\square$\par}  %

As discussed in \S\ref{sec:method:advantage}, we consider a finite space for data category $\mathcal{C}$ and a finite space for reasoning mode $\mathcal{Z}$. For a problem $\x{}$ with data category $c(\x{})=c\in\mathcal{C}$, a policy $\pi(\cdot\mid\x{})$ chooses a reasoning mode $z\sim\pi(\cdot\mid\x{}),~z\in\mathcal{Z}$ for solving this task, and the resulting success rate $R_{zc}$ is determined only by the selected mode $z$ and data category $c$; when multiple attempts are issued on the same problem, their outcomes are independent conditional on the data category, each succeeding with probability $R_{zc}$ under its selected mode.

Critically, we assume an \textbf{imperfect policy} that cannot perfectly observe the true data category $c(\x{})$ and perform optimal mode selection accordingly. Instead, we assume the policy that chooses mode based on observed data category $c'(\x{})$ with imperfect data categorization,
\begin{align*}
\pi(\cdot\mid\x{}) = \pi(\cdot\mid c'(\x{})),
\end{align*}
with some emission probability conditioned on the true data category $c(\x{})$, i.e.
\begin{align*}
\Pr\left[ c'(\x{}) = c' \mid c(\x{}) = c \right].
\end{align*}
For notation convenience, we assume this probability is strictly positive for any $(c,c')$ pair; similarly, we assume the marginal probability $\Pr[c(\x{}) = c]$ is strictly positive for all $c$. Then we have a posterior distribution of
\begin{align*}
\omega(c\mid c') \triangleq \Pr \left[ c(\x{}) = c \mid c'(\x{}) = c'\right]
\end{align*}
that is also non-zero for any $(c,c')$. Then, for a certain observed data category $c'$, we have the \textit{posterior-averaged success rate} of each mode $z \in \mathcal{Z}$ under the observation $c'$,
\begin{align*}
\bar{R}_{zc'} \;\triangleq\; \mathbb{E}_{c\sim\omega(\cdot\mid c')}\left[ R_{zc} \right] \;=\; \sum_{c\in\mathcal{C}} \omega(c\mid c')\, R_{zc},
\end{align*}
i.e. the success rate that mode $z$ effectively attains through the imperfect observation: since the true category is unobserved, a single attempt with mode $z$ succeeds with probability exactly $\bar{R}_{zc'}$ given $c'(\x{}) = c'$ by the tower rule. Based on this formulation, we enforce two critical assumptions that will drive the rest of the proof:

\begin{assumption}[Mode dominance]\label{assump:mode_dominance}
The observed category $c'\in\mathcal{C}$ is dominated by a single mode $z^*\in\mathcal{Z}$:
\begin{align*}
\forall z \in \mZ{}, z \neq z^*,~\bar{R}_{zc'} < \bar{R}_{z^*c'}.
\end{align*}
\end{assumption}

\begin{assumption}[Complementary modes]\label{assump:complementary}
There exists a single worst category $c^\dagger\in\mathcal{C}$,
\begin{align*}
\exists ~c^\dagger \in \mathcal{C}, ~\forall c \in \mathcal{C},c \neq c^{\dagger}, ~ R_{z^*c} > R_{z^*c^\dagger},
\end{align*}
and the category benefits more from a mode other than $z^*$:
\begin{align*}
\exists z \in \mZ{}, ~ R_{zc^\dagger} > R_{z^*c^\dagger}.
\end{align*}
\end{assumption}

We now consider two types of policies, a \textbf{GRPO-RS} policy and a \textbf{PTTS-RL} policy, both operating on the observed category $c'(\x{})=c'$ and issuing $k$ attempts under inference-time budget $k$. Each attempt will adopt a reasoning mode $z$, resulting in a solution \y{} with a binary outcome $r(\y{};\x{}) \in \{0,1\}$, where $\Pr[r(\y{};\x{})= 1] = R_{zc}$.

\mypara{GRPO-RS policy}, as the standard approach, optimizes a marginal policy against \passof{1}, and utilizes repeated sampling (RS) to scale to a larger inference-time budget $k$. With slight abuse of notation, we write $\passof{1}(\pi)$ for the expectation of \passof{1} under policy $\pi$. Formally, the marginal policy is trained to maximize \passof{1}, i.e. the single-attempt success rate,
\begin{align*}
\pi_{\mathrm{rs}}(\cdot\mid c') \;\in\; \argmax_{\pi}\; \passof{1}(\pi),
\end{align*}
where
\begin{align*}
\passof{1}(\pi) &\triangleq \mathbb{E}_{c\sim\omega(\cdot\mid c'),\, z\sim\pi(\cdot\mid c')} \left[ \mathbf{1}\left(r(\y{};\x{})= 1\right) \right] = \mathbb{E}_{c\sim\omega(\cdot\mid c')} \left[ \sum_{z\in\mathcal{Z}} \pi(z\mid c')\, R_{zc} \right] \\
&= \sum_{z\in\mathcal{Z}} \pi(z\mid c')\, \bar{R}_{zc'}.
\end{align*}
At inference budget $k$, RS draws modes i.i.d. from this \emph{fixed} marginal, $z_1,\dots,z_k \overset{\text{i.i.d.}}{\sim} \pi_{\mathrm{rs}}(\cdot\mid c')$, and runs one attempt per draw. The resulting \passof{k} has expectation
\begin{align*}
\passof{k}\left(\piof{rs}\right) = 1 - \mathbb{E}_{c\sim\omega(\cdot\mid c')} \left[ \Big( \sum_{z\in\mathcal{Z}} \pi_{\mathrm{rs}}(z\mid c')\, (1-R_{zc}) \Big)^{k} \right],
\end{align*}
with residual error $\epsilon^{\mathrm{rs}}_k \;\triangleq\; 1 - \passof{k}\left(\piof{rs}\right)$.

\mypara{PTTS-RL policy}, on the other hand, optimizes the use of the entire inference budget directly against \passof{k}. At inference budget $k$, the policy selects an allocation $\mathbf{k} = (k_z)_{z\in\mathcal{Z}}$ that assigns the $k$ attempt slots across reasoning modes, with $k_z \in \mathbb{Z}_{\ge 0}$ and $\sum_{z\in\mathcal{Z}} k_z = k$. An allocation $\mathbf{k}$ then achieves a \passof{k} of:
\begin{align*}
\passof{k}(\mathbf{k}) = 1 - \mathbb{E}_{c\sim\omega(\cdot\mid c')} \left[ \prod_{z\in\mathcal{Z}} (1-R_{zc})^{k_z} \right],
\end{align*}
and the PTTS-RL policy at budget $k$ selects the optimal allocation
\begin{align*}
\mathbf{k}^{(k)} \;\in\; \argmax_{\mathbf{k}:\,\sum_{z} k_z = k}\; \passof{k}(\mathbf{k}),
\end{align*}
with residual error $\epsilon^{\mathrm{ptts}}_k \;\triangleq\; 1 - \passof{k}\bigl(\mathbf{k}^{(k)}\bigr)$ and induced mode-selection distribution $\pi^{(k)}_{\mathrm{ptts}}(z\mid c') \triangleq k^{(k)}_z / k$. In contrast to GRPO-RS whose marginal distribution is optimized once against \passof{1}, PTTS-RL is optimized against \passof{k} specifically for the budget $k$, and different attempts within a single budget may use different modes.

\mypara{Result 1: GRPO-RS collapses onto the dominant mode.} Optimizing against \passof{1} alone leaves no reason to preserve mode diversity: under mode dominance, the optimal marginal policy is unique and deterministic.

\begin{proposition}[GRPO-RS selects only $z^*$]\label{prop:rs_collapse}
Under Assumption~\ref{assump:mode_dominance}, the unique \passof{1}-optimal marginal policy for the observed category $c'$ is the point mass on the dominant mode, $\pi_{\mathrm{rs}}(\cdot\mid c') = \delta_{z^*}$. Consequently, at every inference budget $k$, GRPO-RS issues all $k$ attempts with mode $z^*$, and its residual error is
\begin{align*}
\epsilon^{\mathrm{rs}}_k = \sum_{c\in\mathcal{C}} \omega(c\mid c')\, (1-R_{z^*c})^{k}.
\end{align*}
\end{proposition}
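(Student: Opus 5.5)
The proof has two parts. First we show that the \passof{1}-optimal marginal is unique and equal to $\delta_{z^*}$. Then we substitute this marginal into the RS \passof{k} formula.

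For the first part, I will use the identity derived just above the statement:
\begin{align*}
\passof{1}(\pi)=\sum_{z\in\mZ{}}\pi(z\mid c')\,\bar{R}_{zc'}.
\end{align*}
This is a linear functional over the probability simplex on the finite set $\mZ{}$. For any distribution $\pi$ we have
\begin{align*}
\bar{R}_{z^*c'}-\passof{1}(\pi)=\sum_{z\neq z^*}\pi(z\mid c')\bigl(\bar{R}_{z^*c'}-\bar{R}_{zc'}\bigr),
\end{align*}
using $\sum_z\pi(z\mid c')=1$. By Assumption~\ref{assump:mode_dominance}, every term is nonnegative, and each bracket is strictly positive. So the difference is $\ge 0$, which means $\delta_{z^*}$ attains the maximum value $\bar{R}_{z^*c'}$. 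The difference equals $0$ if and only if $\pi(z\mid c')=0$ for all $z\neq z^*$, that is, $\pi=\delta_{z^*}$. This gives both optimality and uniqueness. The only care needed is to state the strict-positivity argument cleanly; this is the step I would write most carefully, though it is routine.

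For the second part, RS draws $z_1,\dots,z_k$ i.i.d.\ from $\pi_{\mathrm{rs}}(\cdot\mid c')=\delta_{z^*}$, so every attempt uses $z^*$ almost surely. Substituting $\pi_{\mathrm{rs}}=\delta_{z^*}$ into
\begin{align*}
\passof{k}\left(\piof{rs}\right) = 1 - \mathbb{E}_{c\sim\omega(\cdot\mid c')}\Bigl[\Bigl(\sum_z \pi_{\mathrm{rs}}(z\mid c')(1-R_{zc})\Bigr)^{k}\Bigr]
\end{align*}
collapses the inner sum to $1-R_{z^*c}$. Writing the expectation over the finite set $\mathcal{C}$ as a sum weighted by $\omega(c\mid c')$ then gives the stated $\epsilon^{\mathrm{rs}}_k=\sum_{c}\omega(c\mid c')(1-R_{z^*c})^k$.

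Two remarks support this. First, the conditional independence of outcomes given $c$ justifies the $k$-th power inside the expectation. Second, the policy is fixed across $k$, so the conclusion holds at every budget.
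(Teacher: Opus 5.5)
Your proof is correct and takes the same approach as the paper. It uses the same linearity identity $\passof{1}(\delta_{z^*})-\passof{1}(\pi)=\sum_{z\neq z^*}\pi(z\mid c')(\bar{R}_{z^*c'}-\bar{R}_{zc'})$ to get optimality and, via strict dominance, uniqueness, and then substitutes $\delta_{z^*}$ into the RS \passof{k} formula to obtain the closed form.
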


\begin{proof}
Since $\passof{1}(\pi) = \sum_{z} \pi(z\mid c')\, \bar{R}_{zc'}$ is linear in $\pi$ and $\sum_{z} \pi(z\mid c') = 1$,
\begin{align*}
\passof{1}(\delta_{z^*}) - \passof{1}(\pi) = \sum_{z \neq z^*} \pi(z\mid c') \left( \bar{R}_{z^*c'} - \bar{R}_{zc'} \right),
\end{align*}
which is non-negative by Assumption~\ref{assump:mode_dominance} and zero if and only if $\pi(z\mid c') = 0$ for every $z \neq z^*$; hence $\delta_{z^*}$ is the unique maximizer. GRPO-RS therefore issues every attempt with $z^*$, and substituting $\pi_{\mathrm{rs}} = \delta_{z^*}$ into $\passof{k}\left(\piof{rs}\right)$ yields the closed form.
\end{proof}

\mypara{Result 2: PTTS-RL diversifies beyond a finite budget.} Optimizing against \passof{k} directly leads to a qualitatively different behavior: beyond an explicit finite budget, the optimal allocation never concentrates on $z^*$ alone. Let $z^\dagger \in \mathcal{Z}$ denote a complementary mode furnished by Assumption~\ref{assump:complementary}, i.e. $R_{z^\dagger c^\dagger} > R_{z^*c^\dagger}$ (note this forces $z^\dagger \neq z^*$), and define the complementarity gap on the worst category, the failure rate of $z^*$ on its worst category, and its largest failure rate on any other category:
\begin{align*}
\Delta \;\triangleq\; R_{z^\dagger c^\dagger} - R_{z^*c^\dagger} \;>\; 0,
\qquad
q^\dagger \;\triangleq\; 1 - R_{z^*c^\dagger},
\qquad
\rho \;\triangleq\; \max_{c \neq c^\dagger}\, (1 - R_{z^*c}).
\end{align*}

\begin{proposition}[PTTS-RL assigns mass outside $z^*$]\label{prop:diversify}
Under Assumptions~\ref{assump:mode_dominance} and \ref{assump:complementary}, define the finite threshold
\begin{align*}
k^*_{\mathrm{ptts}} \;\triangleq\; 2 + \left\lfloor \frac{\ln \left( \omega(c^\dagger\mid c')\, \Delta \right)}{\ln \left( \rho / q^\dagger \right)} \right\rfloor ,
\end{align*}
with the convention $k^*_{\mathrm{ptts}} \triangleq 2$ when $\rho = 0$. Then for every budget $k \geq k^*_{\mathrm{ptts}}$, every optimal allocation $\mathbf{k}^{(k)}$ satisfies $\sum_{z \neq z^*} k^{(k)}_z \geq 1$; equivalently, the induced policy $\pi^{(k)}_{\mathrm{ptts}}$ assigns non-zero probability mass to modes other than $z^*$.
\end{proposition}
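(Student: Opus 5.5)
The plan is an exchange argument. Fix $k \ge k^*_{\mathrm{ptts}}$ and compare the fully concentrated allocation $\mathbf{k}^{\mathrm{all}}$ (all $k$ slots on $z^*$) with the allocation $\mathbf{k}^{\mathrm{swap}}$ that moves exactly one slot from $z^*$ to the complementary mode $z^\dagger$. Since $k \ge k^*_{\mathrm{ptts}} \ge 2$, this swap is well defined. If we show $\passof{k}(\mathbf{k}^{\mathrm{swap}}) > \passof{k}(\mathbf{k}^{\mathrm{all}})$, then $\mathbf{k}^{\mathrm{all}}$ is not optimal. It is also the only allocation with $\sum_{z\neq z^*}k_z = 0$. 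Hence every optimal allocation must satisfy $\sum_{z\neq z^*}k^{(k)}_z \ge 1$, which is the claim.

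\textbf{Step 1: the error difference.} Write $q_c \triangleq 1-R_{z^*c}$. Using the closed form of $\passof{k}(\mathbf{k})$, the difference in residual errors is
\begin{align*}
\epsilon(\mathbf{k}^{\mathrm{all}}) - \epsilon(\mathbf{k}^{\mathrm{swap}})
= \sum_{c\in\mathcal{C}} \omega(c\mid c')\, q_c^{\,k-1}\bigl(R_{z^\dagger c} - R_{z^* c}\bigr).
\end{align*}
We split this sum into the $c^\dagger$ term and the remaining terms.

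\textbf{Step 2: bounding the two parts.} The $c^\dagger$ term equals $\omega(c^\dagger\mid c')\,(q^\dagger)^{k-1}\Delta$, which is strictly positive. Here $q^\dagger>0$ because $R_{z^*c^\dagger} < R_{z^\dagger c^\dagger} \le 1$. For every $c\neq c^\dagger$, use $R_{z^\dagger c}-R_{z^*c} \ge -1$ and $q_c \le \rho$. Summing, the remaining terms total at least $-(1-\omega(c^\dagger\mid c'))\rho^{k-1} \ge -\rho^{k-1}$. It therefore suffices to prove
\begin{align*}
\omega(c^\dagger\mid c')\,\Delta\,(q^\dagger)^{k-1} > \rho^{k-1}.
\end{align*}

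\textbf{Step 3: the threshold.} This last step is the only real bookkeeping.
\begin{itemize}
\item \emph{Case $\rho=0$.} Since $k-1\ge1$, the right-hand side vanishes, so the inequality holds. This matches the convention $k^*_{\mathrm{ptts}}=2$.
\item \emph{Case $\rho>0$.} The worst-category part of Assumption~\ref{assump:complementary} gives $\rho<q^\dagger$, so $\ln(\rho/q^\dagger)<0$. Also $0<\omega(c^\dagger\mid c')\Delta\le 1$, so $x \triangleq \ln(\omega(c^\dagger\mid c')\Delta)/\ln(\rho/q^\dagger)\ge 0$ is finite. The target inequality is equivalent to $(\rho/q^\dagger)^{k-1} < \omega(c^\dagger\mid c')\Delta$. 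Dividing by the negative quantity $\ln(\rho/q^\dagger)$ flips the direction, so this is equivalent to $k-1 > x$. That holds because $k \ge 2+\lfloor x\rfloor$ gives $k-1 \ge 1+\lfloor x\rfloor > x$.
\end{itemize}

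I expect the main point of care to be this sign handling and floor-function strictness, together with the degenerate cases $\rho=0$ and $\omega(c^\dagger\mid c')\Delta=1$. Mode dominance (Assumption~\ref{assump:mode_dominance}) is not needed for the inequality itself. It only serves to identify $z^*$ as the mode that GRPO-RS collapses onto, via Proposition~\ref{prop:rs_collapse}, which is what makes this contrast meaningful.
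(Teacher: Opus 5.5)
Your proposal is correct and follows the paper's proof essentially step for step. It uses the same one-slot swap from $z^*$ to $z^\dagger$, the same lower bound $\omega(c^\dagger\mid c')\,\Delta\,(q^\dagger)^{k-1}-\rho^{k-1}$ (keep the $c^\dagger$ term exactly, crudely bound the rest), and the same sign-flip and floor argument for the threshold. One nuance: the paper does use mode dominance, to rule out $\mathcal{C}=\{c^\dagger\}$ so that $\rho$ is a maximum over a nonempty set (which also gives $\omega(c^\dagger\mid c')<1$); your remark that Assumption~\ref{assump:mode_dominance} is not needed is fine for the inequality but glosses over this well-definedness point.
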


\begin{proof}
The threshold is well-defined and finite. First, $|\mathcal{C}| \geq 2$: if $\mathcal{C} = \{c^\dagger\}$, mode dominance would give $R_{z^\dagger c^\dagger} = \bar{R}_{z^\dagger c'} < \bar{R}_{z^*c'} = R_{z^*c^\dagger}$, contradicting Assumption~\ref{assump:complementary}; hence $\rho$ is a maximum over a non-empty finite set. Next, $q^\dagger \geq \Delta > 0$ (as $R_{z^\dagger c^\dagger} \leq 1$), and $0 \leq \rho < q^\dagger$ since $c^\dagger$ is the single worst category of $z^*$. Finally, $\omega(c^\dagger\mid c') \in (0,1)$ because the posterior is strictly positive on at least two categories, so $\omega(c^\dagger\mid c')\,\Delta \in (0,1)$. For $\rho > 0$, both logarithms are therefore negative and $k^*_{\mathrm{ptts}} \geq 2$ is finite.

Now fix any $k \geq 2$ and compare the pure allocation $\mathbf{k}^{\mathrm{all}}$, placing all $k$ slots on $z^*$, against the swapped allocation $\mathbf{k}^{\mathrm{swap}}$, placing $k-1$ slots on $z^*$ and one on $z^\dagger$. Since the corresponding failure products differ only in one factor,
\begin{align*}
\passof{k}(\mathbf{k}^{\mathrm{swap}}) - \passof{k}(\mathbf{k}^{\mathrm{all}})
&= \sum_{c\in\mathcal{C}} \omega(c\mid c')\, (1-R_{z^*c})^{k-1} \left( R_{z^\dagger c} - R_{z^*c} \right) \\
&\geq\; \omega(c^\dagger\mid c')\, \Delta\, (q^\dagger)^{k-1} - \rho^{k-1},
\end{align*}
where the last step keeps the $c^\dagger$ term exactly and bounds every other term via $|R_{z^\dagger c} - R_{z^*c}| \leq 1$, $1-R_{z^*c} \leq \rho$, and $\sum_{c \neq c^\dagger} \omega(c\mid c') \leq 1$. If $\rho = 0$, the right-hand side is positive for every $k \geq 2 = k^*_{\mathrm{ptts}}$. Otherwise it is positive iff $(\rho/q^\dagger)^{k-1} < \omega(c^\dagger\mid c')\,\Delta$, i.e. iff $k - 1 > \ln(\omega(c^\dagger\mid c')\Delta) / \ln(\rho/q^\dagger)$, since dividing by $\ln(\rho/q^\dagger) < 0$ flips the inequality; this holds for every $k \geq k^*_{\mathrm{ptts}}$ because $k^*_{\mathrm{ptts}} - 1 = 1 + \lfloor \cdot \rfloor$ strictly exceeds the ratio. Hence the pure allocation is strictly suboptimal for every $k \geq k^*_{\mathrm{ptts}}$, and since only finitely many allocations of size $k$ exist, every optimal allocation places at least one slot on a mode other than $z^*$.
\end{proof}

Note that $k^*_{\mathrm{ptts}}$ is a sufficient budget rather than an exact transition point—the proof discards every favorable term outside $c^\dagger$—and it decreases as the worst category becomes more probable (larger $\omega(c^\dagger\mid c')$) or more fixable (larger $\Delta$): hedging pays off sooner when the failure mode of $z^*$ is both likely and repairable.

\mypara{Result 3: PTTS-RL reduces error exponentially and strictly faster; GRPO-RS may not.}
Having characterized how each policy allocates its attempts, we now compare their residual errors and the exponential rates at which these decay. The two policies part ways on both counts: PTTS-RL drives its error to zero exponentially fast under Assumptions~\ref{assump:mode_dominance} and \ref{assump:complementary} alone, and with a strictly better error exponent, whereas GRPO-RS converges only when its dominant mode retains some success probability on the worst category. Throughout, let
\begin{align*}
r \;\triangleq\; \frac{1-R_{z^\dagger c^\dagger}}{q^\dagger} \;\in\; [0,1),
\end{align*}
which is well-defined since $q^\dagger > 0$ and below one since $R_{z^\dagger c^\dagger} > R_{z^*c^\dagger}$ by Assumption~\ref{assump:complementary}.

\begin{proposition}[Error rates and strict separation]\label{prop:rates}
Under Assumptions~\ref{assump:mode_dominance} and \ref{assump:complementary}:
\begin{enumerate}
    \item[(i)] $\epsilon^{\mathrm{ptts}}_k \leq \epsilon^{\mathrm{rs}}_k$ for every budget $k \geq 1$.
    \item[(ii)] $\omega(c^\dagger\mid c')\, (q^\dagger)^{k} \leq \epsilon^{\mathrm{rs}}_k \leq (q^\dagger)^{k}$ for every $k \geq 1$, and hence
    \begin{align*}
    \lambda_{\mathrm{rs}} \;\triangleq\; \lim_{k\to\infty} \bigl(\epsilon^{\mathrm{rs}}_k\bigr)^{1/k} \;=\; q^\dagger .
    \end{align*}
    In particular, $\epsilon^{\mathrm{rs}}_k \to 0$ if and only if $R_{z^*c^\dagger} > 0$; otherwise $\epsilon^{\mathrm{rs}}_k$ converges to the plateau $\omega(c^\dagger\mid c') > 0$: GRPO-RS never solves problems from the worst category.
    \item[(iii)] For every $\alpha \in (0,1)$ satisfying $\rho^{1-\alpha} < q^\dagger$, which exists since $\rho < q^\dagger$,
    \begin{align*}
    \limsup_{k\to\infty}\, \bigl(\epsilon^{\mathrm{ptts}}_k\bigr)^{1/k} \;\leq\; \max\left\{ q^\dagger r^{\alpha},\;\; \rho^{\,1-\alpha} \right\} \;<\; q^\dagger \;=\; \lambda_{\mathrm{rs}} .
    \end{align*}
\end{enumerate}
In particular, $\epsilon^{\mathrm{ptts}}_k \to 0$ at least exponentially fast even when $\epsilon^{\mathrm{rs}}_k$ does not vanish, and PTTS-RL strictly improves the error exponent over GRPO-RS.
\end{proposition}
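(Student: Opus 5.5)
For (i), I would use that the pure allocation $\mathbf{k}^{\mathrm{all}}$, which puts all $k$ slots on $z^*$, is feasible for the PTTS-RL optimization. By Proposition~\ref{prop:rs_collapse}, its residual error $\sum_c \omega(c\mid c')(1-R_{z^*c})^k$ is exactly $\epsilon^{\mathrm{rs}}_k$. Optimality of $\mathbf{k}^{(k)}$ then gives $\epsilon^{\mathrm{ptts}}_k \le \epsilon^{\mathrm{rs}}_k$ at once.

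For (ii), I start from the closed form in Proposition~\ref{prop:rs_collapse}. The lower bound comes from keeping only the $c^\dagger$ term. The upper bound comes from $1-R_{z^*c} \le q^\dagger$ for every $c$, since $c^\dagger$ is the worst category of $z^*$, together with $\sum_c \omega = 1$. Taking $k$-th roots and using $\omega(c^\dagger\mid c')^{1/k}\to 1$ gives $\lambda_{\mathrm{rs}} = q^\dagger$.
\begin{itemize}
\item If $R_{z^*c^\dagger}>0$, then $q^\dagger<1$ and the upper bound tends to zero.
\item If $R_{z^*c^\dagger}=0$, then $q^\dagger = 1$, the $c^\dagger$ term equals $\omega(c^\dagger\mid c')$ for all $k$, and every other term is at most $\rho^k\to 0$ because $\rho<q^\dagger=1$. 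So the error converges to $\omega(c^\dagger\mid c')>0$.
\end{itemize}

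For (iii), I would bound $\epsilon^{\mathrm{ptts}}_k$ by the error of an explicit feasible allocation. Put $m = \lceil \alpha k\rceil$ slots on $z^\dagger$ and $k-m$ slots on $z^*$. Its error is $\sum_c \omega(c\mid c')(1-R_{z^*c})^{k-m}(1-R_{z^\dagger c})^{m}$, which I split into two parts:
\begin{itemize}
\item The $c^\dagger$ term is at most $(q^\dagger)^{k-m}\big(r q^\dagger\big)^m = (q^\dagger)^k r^{m} \le (q^\dagger r^{\alpha})^k$, using $1-R_{z^\dagger c^\dagger} = r q^\dagger$ and $r\le1$.
\item Each $c\neq c^\dagger$ term is at most $\rho^{k-m}$, bounding the $z^\dagger$ failure factor by $1$. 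Since $k-m \ge (1-\alpha)k - 1$, this part is at most $\rho^{(1-\alpha)k-1}$.
\end{itemize}
Hence $\epsilon^{\mathrm{ptts}}_k \le (q^\dagger r^\alpha)^k + \rho^{-1}\rho^{(1-\alpha)k}$ when $\rho>0$; when $\rho=0$ the second part vanishes for $k-m\ge1$. Taking $k$-th roots and using $(a^k+Cb^k)^{1/k}\to\max(a,b)$ yields the limsup bound.

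The strict inequality holds for two reasons:
\begin{itemize}
\item $q^\dagger r^\alpha < q^\dagger$, because $r<1$, $\alpha>0$, and $q^\dagger>0$. If $r=0$ the term is zero, which is fine.
\item $\rho^{1-\alpha}<q^\dagger$ by the choice of $\alpha$.
\end{itemize}
Such an $\alpha$ exists because $\rho<q^\dagger$ (shown in the proof of Proposition~\ref{prop:diversify}) and $\rho^{1-\alpha}\to\rho$ as $\alpha\to0^+$ when $\rho>0$; when $\rho=0$ any $\alpha$ works.

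The final claims then follow:
\begin{itemize}
\item The limsup is below $1$, so $\epsilon^{\mathrm{ptts}}_k\to0$ exponentially, including the plateau case $q^\dagger=1$ where GRPO-RS stalls.
\item The exponent is strictly below $\lambda_{\mathrm{rs}}$.
\end{itemize}

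The main obstacle is bookkeeping in (iii): handling the rounding of $\alpha k$, the $\rho=0$ and $r=0$ edge cases, and justifying the root limit of a sum of two exponentials. I would do the rounding with $\lceil\cdot\rceil$ and absorb constants into the $k$-th root, where they vanish.
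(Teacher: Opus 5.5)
Your proposal is correct and follows the paper's proof essentially step for step: the all-$z^*$ allocation as a feasible competitor for (i), the sandwich from the closed form of Proposition~\ref{prop:rs_collapse} for (ii), and for (iii) the mixed allocation with $\lceil \alpha k\rceil$ slots on $z^\dagger$ as a certificate, split into the $c^\dagger$ term and the $\rho^{k-\lceil\alpha k\rceil}$ remainder. The only difference is cosmetic: the paper bounds the total by $(1+\rho^{-1})\mu^k$ with $\mu=\max\{q^\dagger r^{\alpha},\rho^{1-\alpha}\}$ instead of taking the limit of $(a^k+Cb^k)^{1/k}$, and the two amount to the same thing.
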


\begin{proof}
\emph{(i)} The pure allocation placing all $k$ slots on $z^*$ is feasible for the optimization defining $\mathbf{k}^{(k)}$, and its \passof{k} coincides with that of GRPO-RS by Proposition~\ref{prop:rs_collapse}; optimality of $\mathbf{k}^{(k)}$ yields the claim.

\emph{(ii)} Start from the closed form of Proposition~\ref{prop:rs_collapse}. Since $c^\dagger$ is the single worst category of $z^*$ by Assumption~\ref{assump:complementary}, every category satisfies $1-R_{z^*c} \leq q^\dagger$; bounding every term accordingly gives the upper bound, and retaining only the $c^\dagger$ term gives the lower bound, where $\omega(c^\dagger\mid c') > 0$ since the posterior is non-zero for every pair. Both bounds hold for every value of $R_{z^*c^\dagger}$, including $q^\dagger = 1$; taking $k$-th roots and using $\omega(c^\dagger\mid c')^{1/k} \to 1$ gives $\lambda_{\mathrm{rs}} = q^\dagger$. If $R_{z^*c^\dagger} > 0$, then $q^\dagger < 1$ and the upper bound vanishes. If instead $R_{z^*c^\dagger} = 0$, the $c^\dagger$ term is frozen at $\omega(c^\dagger\mid c')$, while every remaining term vanishes as $k \to \infty$ since $1-R_{z^*c} \leq \rho < q^\dagger = 1$; hence $\epsilon^{\mathrm{rs}}_k$ converges to $\omega(c^\dagger\mid c')$.

\emph{(iii)} Fix $\alpha \in (0,1)$ with $\rho^{1-\alpha} < q^\dagger$; such $\alpha$ exists because $\rho^{1-\alpha} \to \rho < q^\dagger$ as $\alpha \to 0^+$. For every $k > \frac{1}{1-\alpha}$, so that $\lceil \alpha k \rceil < k$, consider the feasible allocation placing $\lceil \alpha k \rceil$ slots on $z^\dagger$ and the remaining $k - \lceil \alpha k \rceil$ slots on $z^*$. Its $c^\dagger$ term satisfies, using $r < 1$ and $\lceil \alpha k \rceil \geq \alpha k$,
\begin{align*}
\omega(c^\dagger\mid c')\,(q^\dagger)^{\,k - \lceil \alpha k \rceil}\, (1-R_{z^\dagger c^\dagger})^{\lceil \alpha k \rceil}
\;\leq\; (q^\dagger)^{k}\, r^{\lceil \alpha k \rceil}
\;\leq\; \left( q^\dagger r^{\alpha} \right)^{k} .
\end{align*}
Every other term vanishes if $\rho = 0$; otherwise, using $1-R_{z^\dagger c} \leq 1$, $1-R_{z^*c} \leq \rho < 1$, $\lceil \alpha k \rceil \leq \alpha k + 1$, and $\sum_{c \neq c^\dagger} \omega(c\mid c') \leq 1$, these terms contribute at most
\begin{align*}
\rho^{\,k - \lceil \alpha k \rceil}
\;\leq\; \rho^{\,(1-\alpha)k - 1}
\;=\; \rho^{-1} \left( \rho^{\,1-\alpha} \right)^{k} .
\end{align*}
Writing $\mu \triangleq \max\{ q^\dagger r^{\alpha},\, \rho^{1-\alpha} \}$, optimality of $\mathbf{k}^{(k)}$ over feasible allocations therefore certifies
\begin{align*}
\epsilon^{\mathrm{ptts}}_k \;\leq\; \left( 1 + \rho^{-1} \right) \mu^{k} \quad \text{when } \rho > 0,
\qquad
\epsilon^{\mathrm{ptts}}_k \;\leq\; \mu^{k} \quad \text{when } \rho = 0.
\end{align*}
Taking $k$-th roots gives $\limsup_k (\epsilon^{\mathrm{ptts}}_k)^{1/k} \leq \mu$. It remains to check $\mu < q^\dagger$: the first branch satisfies $q^\dagger r^{\alpha} < q^\dagger$ since $r < 1$ and $\alpha > 0$, and the second satisfies $\rho^{1-\alpha} < q^\dagger$ by the choice of $\alpha$. Since $\mu < q^\dagger \leq 1$, the certificate also shows $\epsilon^{\mathrm{ptts}}_k \to 0$ at least exponentially fast.
\end{proof}

The separation quantifies the value of hedging: GRPO-RS decays exactly at the rate at which its single mode clears the worst category, and whenever $z^*$ is hopeless on that category, repeated sampling replays the same failure and the error plateaus at exactly its posterior mass. Devoting even a constant fraction of the budget to a complementary mode strictly improves the error exponent, with the residual error dominated by whichever category the mixed allocation covers worst.

\subsection{Closed-Form Analysis of a Synthetic Example}
\label{app:math:eg}

Finally, we apply our framework to a two-category, two-mode example to derive analytical expressions for both $\passof{k}$ and mode coverage. These results are visualized in Figure~\ref{fig:mode_collapse_visualization} of the main text.

\mypara{Instantiation.} Let $\mathcal{C} = \{c_1, c_2\}$ and $\mathcal{Z} = \{z_1, z_2\}$, with a prior probability of $\Pr[c(\x{}) = c_1] = p = 0.75$. We consider a ``blind'' policy as an extreme case of an imperfect policy, where the observed category $c'(\x{})$ is independent of the true category. Consequently, the posterior reduces to the prior: $\omega(c_1\mid c') = p = 0.75$ and $\omega(c_2\mid c') = 1-p = 0.25$. We define the success rates as follows:
\begin{align*}
R_{z_1c_1} = R_{z_2c_2} = a = 0.40, \quad
R_{z_1c_2} = R_{z_2c_1} = b = 0.02.
\end{align*}
Thus, each mode is highly effective on exactly one category.

This instantiation satisfies the two critical assumptions. The posterior success rates are $\bar{R}_{z_1c'} = pa + (1-p)b = 0.305$ for $z_1$, and $\bar{R}_{z_2c'} = pb + (1-p)a = 0.115$ for $z_2$; as $\bar{R}_{z_1c'} > \bar{R}_{z_2c'}$, Assumption~\ref{assump:mode_dominance} holds with the dominant mode $z^* = z_1$. For $z_1$, then, $c^\dagger = c_2$ is the single worst category that is better served by $z^\dagger = z_2 \neq z_1$. Thus, Assumption~\ref{assump:complementary} also holds.

\mypara{GRPO-RS in closed form.} By Proposition~\ref{prop:rs_collapse}, GRPO-RS always selects the dominant mode $z_1$ (see Figure~\ref{fig:mode_collapse_visualization:2}), yielding:
\begin{align*}
\passof{k}\left(\piof{rs}\right)&= 1 - p\cdot (1-a)^{k} - (1-p)\cdot (1-b)^{k} \\
&= 1 - 0.75 \times 0.6^{k} - 0.25 \times 0.98^{k}.
\end{align*}
As visualized in Figure~\ref{fig:mode_collapse_visualization:1}, the performance curve quickly saturates on $c_1$ problems (as the $0.6^k$ term decays rapidly to near zero) and then only slowly improves on the remaining $c_2$ problems (due to the slow decay of the $0.98^k$ term).

\mypara{PTTS-RL in closed form.} By Proposition~\ref{prop:diversify}, PTTS-RL splits the budget across both modes once $k$ is large enough. In this symmetric instance, the optimal allocation places $m(k) \approx k/2 - 1.12$ (rounded to an integer) slots on $z_2$, which becomes non-zero and breaks the mode collapse at $k = 4$, as visualized in Figure~\ref{fig:mode_collapse_visualization:2}. The \passof{k} is then derived as:
\begin{align*}
\passof{k}\bigl(\mathbf{k}^{(k)}\bigr)
&= 1 - p\,(1-a)^{\left(k-m(k)\right)}(1-b)^{m(k)} - (1-p)\,(1-b)^{\left(k-m(k)\right)}(1-a)^{m(k)} \\
&= 1 - 0.75\times 0.6^{\left(k-m(k)\right)}\times 0.98^{m(k)} - 0.25\times 0.98^{\left(k-m(k)\right)}\times 0.6^{m(k)},
\end{align*}
which strictly exceeds $\passof{k}\left(\piof{rs}\right)$ once $m(k) > 0$. Moreover, since PTTS-RL splits the inference budget approximately evenly ($m(k) \sim k/2$), the residual error decays at a rate of approximately $\sqrt{(1-a)(1-b)} = \sqrt{0.6 \times 0.98} \approx 0.77$ per attempt, significantly faster than the $0.98^{k}$ term of GRPO-RS, leading to the wide gap  in Figure~\ref{fig:mode_collapse_visualization:1}.

\section{Training Details}
\label{app:training-details}

For the PTTS-RL training, we initialize from the Qwen3-1.7B/4B-Base models, using the corresponding Qwen3-1.7B or Qwen3-4B models as executors. The execution budget is fixed as $4k$ tokens, and the branching factor $k$ is set as 4. Our implementation is based on \texttt{verl} \citep{sheng2024hybridflow}. We train on the DAPO-Math-17K dataset \citep{yu2026dapo} using 8x 80GB NVIDIA A100 GPUs. Each model is trained for 240 optimization steps, and we report results from the final checkpoint. Additional hyperparameters are shown in Table~\ref{tab:hyperparameters}.

\begin{table}[h]
\centering
\begin{tabular}{ll}
\hline
\textbf{Hyperparameter} & \textbf{Value} \\
\hline
Batch size & 16 \\
Responses per prompt & 16 \\
PPO clipping ratio (lower) & 0.20 \\
PPO clipping ratio (upper) & 0.28 \\
Learning rate & $1\times10^{-6}$ \\
Rollout temperature & 1.0 \\
Weight decay & 0.1 \\
Gradient clipping & 1.0 \\
\hline
\end{tabular}
\caption{Training hyperparameters.}
\label{tab:hyperparameters}
\end{table}

\section{Analysis Details}

\subsection{Outline Adherence Analysis}
\label{app:outline-validity-following}

We use GPT-5-mini to evaluate how faithfully the executor follows the outline generated by the planner. Specifically, for each instance, we provide the judge with the problem statement, the planner-generated outline, and the corresponding executor-generated response, and ask it to assess whether the executor's primary reasoning path implements the strategy, decomposition, or perspective specified by the outline. The detailed prompt is shown in Prompt \ref{prompt:outline-following}. The model assigns a score on a five-point scale, which we normalize to [0, 1] for ease of interpretation and comparison across settings.

\subsection{Diversity Analysis by Clustering}
\label{app:concept-clustering}

We quantify the diversity of both outlines and solutions by clustering them into distinct conceptual groups and computing the metrics reported in Appendix \ref{app:diversity-metrics}. We apply the same procedure separately to planner outlines and executor solutions.

Concretely, the clustering procedure consists of two steps. (1) \textbf{Concept extraction}: we prompt GPT-5-mini to identify the single most important mathematical or logical concept underlying each planner outline $o_i$ and each executor response $\y_i$. The extracted concept may be a theorem, formula, invariant, transformation, or canonical solution strategy. The full prompt is shown in Prompt~\ref{prompt:concept-extraction}. (2) \textbf{Concept clustering}: after extracting the concepts independently, we provide GPT-5-mini with all extracted concepts for each problem and ask it to group concepts that represent the same underlying mathematical mechanism. The full prompt is shown in Prompt~\ref{prompt:concept-clustering}.

\subsection{Diversity Metrics}
\label{app:diversity-metrics}

For a given problem, consider $k$ responses, either $k$ outlines sampled from the planner or $k$ solutions generated by repeated sampling or PTTS. We use the procedure described above in Appendix \ref{app:concept-clustering} to assign each response $i$ to a cluster $c_i$, and use the following metrics to quantify their diversity:

\mypara{Overall diversity.}
We measure overall diversity by the number of distinct clusters represented among the $k$ responses, formally defined as:
\begin{align*}
\mathcal{C} = \{c_i \mid i=1,\ldots,k\},
\qquad
\textmd{Overall diversity} = |\mathcal{C}|.
\end{align*}
Intuitively, a larger value indicates that the responses cover a broader range of distinct reasoning approaches.

\mypara{Useful diversity.}
Overall diversity can be easily hacked by producing responses that are superficially diverse but ultimately unhelpful for solving the problem. We therefore also measure diversity among responses that eventually lead to a correct answer. Let $r_i \in \{0,1\}$ indicate whether response $i$ leads to a correct solution, and define
\begin{align*}
\mathcal{C}^{+} = \{c_i \mid r_i = 1\}.
\end{align*}
We normalize the number of distinct clusters among correct responses by the total number of correct responses:
\begin{align*}
\textmd{Useful diversity}
= |\mathcal{C}^{+}| ~/~ \sum_{i=1}^{k} r_i.
\end{align*}
Thus, useful diversity is high when correct responses span distinct reasoning approaches, and low when many correct responses concentrate on the same approach.

\subsection{Ranked-Averaged Correlation Test}
\label{app:diversity:correlation_by_rank}

When reporting an aggregated correlation between two metrics, such as diversity and \passof{k}, we have $M$ groups of responses for each problem ($M=16$ in \S\ref{sec:exp:main}), each with a diversity score and a \passof{k} value. Typically, each problem provides too few points to reliably estimate a correlation on its own. We therefore leverage data across multiple problems to identify a consistent trend. However, points from different problems cannot be pooled directly: simply concatenating them introduces problem difficulty as a confounding factor, while directly averaging across problems is not meaningful because there is no natural correspondence between groups from different problems.

We therefore use a rank-averaging approach that aligns points across problems by their \textit{relative position} rather than their raw values. For a variable $a$ (e.g., outline diversity), let $a_{q,i}$, $i=1,\ldots,M$, denote its values for problem $q$, and let $b_{q,i}$ denote the corresponding values of a target variable $b$ (e.g., \passof{k}). For each problem, we sort the $M$ groups by $a$. Let $a_{q,(j)}$ denote the $j$-th smallest value of $a$, and let $b_{q,(j)}$ denote the $b$-value of the same group. We then average both variables across problems at each rank $j$:
\begin{align*}
\bar a_j = \frac{1}{Q}\sum_{q=1}^{Q} a_{q,(j)}, \qquad
\bar b_j = \frac{1}{Q}\sum_{q=1}^{Q} b_{q,(j)}.
\end{align*}
Finally, we compute the Spearman correlation between $\{\bar a_j\}_{j=1}^M$ and $\{\bar b_j\}_{j=1}^M$.

We apply this procedure to three pairs in \S\ref{sec:exp:main}: outline diversity vs.\ response diversity, outline diversity vs.\ \passof{k}, and response diversity vs.\ \passof{k}.

\clearpage
\section{Qualitative Examples}

This section presents additional qualitative examples, including an example of diversity induced by PTTS in Figure~\ref{fig:qualitative_diversity} and an example of outline adherence and drift in Figure~\ref{fig:qualitative_adherence_drift}.

\definecolor{ptthlcolor}{RGB}{0,86,160}
\newcommand{\ptthl}[1]{{\color{ptthlcolor}\bfseries #1}}
\definecolor{pttdvcolor}{RGB}{204,85,0}
\newcommand{\pttdv}[1]{{\color{pttdvcolor}\bfseries #1}}
\newcommand{\pttbox}[1]{%
  \noindent\fcolorbox{black!35}{black!3}{%
    \parbox{\dimexpr\linewidth-2\fboxsep-2\fboxrule\relax}{%
      \setlength{\parskip}{4pt}\setlength{\parindent}{0pt}#1}}%
}
\newcommand{\pttok}{{\color{green!45!black}\checkmark}}
\newcommand{\pttno}{{\color{red!70!black}$\times$}}
\newcommand{\pttdash}{\leaders\hrule height 2.7pt depth -2.3pt\hfill}
\newcommand{\pttsep}{\par\vspace{-1pt}\noindent\textcolor{black!30}{\rule{\linewidth}{0.4pt}}\par\vspace{-1pt}}
\newcommand{\pttbr}[1]{\par\vspace{1pt}\noindent
  \hbox to \linewidth{\color{black!40}\pttdash\hspace{6pt}%
    \textcolor{black}{\textbf{Branch #1}}\hspace{6pt}\pttdash}\par\vspace{-1pt}}

\begin{figure}[!hp]
\scriptsize
\setlength{\parindent}{0pt}

\pttbox{%
\textbf{Problem:} Let $\triangle ABC$ be a right triangle with $\angle A = 90^\circ$ and $BC = 38.$ There exist points $K$ and $L$ inside the triangle such that $AK = AL = BK = CL = KL = 14.$ The area of the quadrilateral $BKLC$ can be expressed as $n\sqrt3$ for some positive integer $n.$ Find $n.$

\textbf{Answer:} $104$}

\vspace{5pt}
\pttbox{%
\textbf{Repeated Sampling:} All four branches develop the same line of reasoning: they place $A$ at the origin with $B$ and $C$ on the axes, derive $AB^2 + AC^2 = 1444$, and locate $K$ and $L$ as intersections of equal-radius circles. None resolves the resulting system. Across the full pool of $64$ branches, only two distinct solution clusters appear.

\pttsep
$\mathbf{Pass@4 = 0}$}

\vspace{6pt}
\pttbox{%
\textbf{PTTS-RL}
\pttbr{1}
\textbf{Outline:} \ptthl{Symmetry and Equilateral Triangles}: Since $AK = AL = BK = CL = 14$, we can consider that points $K$ and $L$ form two equilateral triangles with sides of length 14. \ldots\ we can infer that the quadrilateral $BKLC$ is composed of two congruent triangles \ldots

\textbf{Reasoning:} First, the problem mentions that $AK = AL = BK = CL = 14$. So, points $K$ and $L$ are each 14 units away from $A$ and from $B$ and $C$ respectively. Also, $KL$ is 14. \ptthl{The solution outline suggests symmetry and equilateral triangles.} Let me try to visualize this. \ldots\ \ptthl{But the outline says that quadrilateral $BKLC$ is composed of two congruent triangles.} Maybe triangles $BKC$ and $LKC$? Wait, not sure. Alternatively, maybe triangles $BKL$ and $LKC$? Hmm. Alternatively, \ptthl{since $AKL$ is equilateral, and $BK = CL = 14$, maybe there's some symmetry here.} \ldots

\textbf{Final answer:} none. \quad \textbf{Correct:} \pttno

\pttbr{2}
\textbf{Outline:} \ptthl{Coordinate Geometry}: We can place the triangle $\triangle ABC$ in a coordinate plane, with point $A$ at the origin $(0,0)$, point $B$ on the x-axis, and point $C$ on the y-axis. \ldots\ calculate the area of quadrilateral $BKLC$ using the Shoelace formula or the determinant method \ldots

\textbf{Reasoning:} \ptthl{First, let me recall the solution outline. The problem suggests using coordinate geometry.} So I should place triangle $ABC$ on a coordinate plane. Let me try to visualize that. \ptthl{Since angle $A$ is $90^\circ$, I can put point $A$ at $(0,0)$, point $B$ on the x-axis, so $B$ is $(b, 0)$, and point $C$ on the y-axis, so $C$ is $(0, c)$.} Then, $BC$ is the hypotenuse, so the distance between $B$ and $C$ is $\sqrt{b^2 + c^2} = 38$. So that gives me the equation $b^2 + c^2 = 38^2 = 1444$. \ldots\ \ptthl{So now, coordinates of $K$ are $(b/2, \sqrt{196 - b^2/4})$ and coordinates of $L$ are $(\sqrt{196 - c^2/4}, c/2)$.} Now, we need to use the $KL$ distance equation \ldots

\textbf{Final answer:} none. \quad \textbf{Correct:} \pttno

\pttbr{3}
\textbf{Outline:} \ptthl{Trigonometric Relationships and Angles}: \ldots\ if we denote the angle $\angle BAK$ as $\theta$, we can use trigonometric identities to express the lengths of the segments $BK$ and $CL$ in terms of $\theta$. \ldots

\textbf{Reasoning:} Alternatively, maybe I can consider triangles $ABK$ and $ALC$. Since $AK = BK = 14$, triangle $ABK$ is isoceles with $AK = BK$. \ldots\ \ptthl{Therefore, angle at $A$ is angle $BAK$, which I denoted earlier as $\theta$. Then, using the law of cosines:} \ldots\ Simplify: $196 = c^2 + 196 - 28c \cos(\theta) \Rightarrow 0 = c^2 - 28c \cos(\theta) \Rightarrow c^2 = 28c \cos(\theta) \Rightarrow c = 28 \cos(\theta)$. Therefore, $\cos(\theta) = c / 28$. \ldots\ \ptthl{Therefore, coordinates of $K$ are $(c/2, 14 \sin(\theta)) = (14 \cos(\theta), 14 \sin(\theta))$} \ldots

\textbf{Final answer:} none. \quad \textbf{Correct:} \pttno

\pttbr{4}
\textbf{Outline:} \ptthl{Geometric Transformations and Similarity}: \ldots\ if we rotate point $K$ around point $A$ by $120^\circ$ (a characteristic angle in equilateral triangles), we can create a new point that might help in understanding the relationship between $K$ and $L$. \ldots

\textbf{Reasoning:} \ptthl{The solution outline suggests using geometric transformations, like rotations. Maybe a rotation by 120 degrees?} Since 120 degrees is related to equilateral triangles, which have angles of 60 degrees, and maybe that could help with the distances here. Let me think. \ldots\ \ptthl{Alternatively, maybe there's a rotation that maps $K$ to $L$ or something like that. \ldots\ Suppose we rotate point $K$ around $A$ by 60 degrees. If that rotation maps $K$ to $L$, then $AK = AL = 14$, and angle between $AK$ and $AL$ is 60 degrees. Then triangle $AKL$ would be equilateral, which matches the given $KL = 14$.} So that seems possible. \ldots\ But regardless, the area of $BKLC$ came out to be $104\sqrt{3}$. Given that the steps are correct and the logic holds, I think that is the answer. Therefore, the value of $n$ is 104.

\textbf{Final answer:} $104$. \quad \textbf{Correct:} \pttok

\pttsep
$\mathbf{Pass@4 = 1}$}

\vspace{-0.5em}
\caption{\textbf{Qualitative example of diversity induced by PTTS}, generated by PTTS-RL with Qwen3-4B. Reasoning parts that explicitly follow the given outline are \ptthl{highlighted}. The four reasoning traces cover distinct directions, with one reaching the correct final answer.}
\label{fig:qualitative_diversity}
\end{figure}

\begin{figure}[!ht]
\scriptsize
\setlength{\parindent}{0pt}

\pttbox{%
\textbf{Problem:} Find the sum of all real numbers $r$ such that there is at least one point where the circle with radius $r$ centered at $(4, 39)$ is tangent to the parabola with equation $2y = x^2 - 8x + 12.$

\textbf{Answer:} $50$}

\vspace{6pt}
\pttbox{%
\textbf{Outline:} \ptthl{Geometric Distance Approach}: Begin by considering the geometric interpretation of tangency between a circle and a parabola. \ptthl{The distance from the center of the circle $(4, 39)$ to any point on the parabola must equal the radius $r$} of the circle at the point of tangency. \ptthl{Define the distance formula} between the center of the circle and a general point $(x, y)$ on the parabola. This distance will be expressed in terms of $x$ and $y$, with $y$ being determined by the parabola's equation. Since the circle and parabola are tangent, \ptthl{the distance equation will have exactly one solution for $x$}. This condition will provide a relationship between $x$, $y$, and $r$. \ptthl{Use the discriminant of the resulting quadratic equation in $x$} to ensure that there is exactly one solution, which implies that \ptthl{the discriminant must be zero}. This will give us a condition on $r$ without needing to solve for $x$ explicitly. Solve the condition on $r$ obtained from the discriminant to find the possible values of $r$. The sum of these values will be the answer.

\pttsep
\textbf{Reasoning:} First, I need to recall what it means for a circle and a parabola to be tangent. Tangent means they touch at exactly one point. So, the distance from the center of the circle to the parabola must be equal to the radius $r$, and there should be exactly one such point.

\ptthl{The solution outline suggests using the geometric distance approach. So, I should start by considering a general point $(x, y)$ on the parabola.} Since the parabola's equation is given as $2y = x^2 - 8x + 12$, I can express $y$ in terms of $x$. \ldots\ Dividing both sides by 2: $y = (x^2 - 8x + 12)/2$. So, any point on the parabola can be represented as $(x, (x^2 - 8x + 12)/2)$.

\ptthl{Now, the distance from the center of the circle $(4, 39)$ to this point $(x, y)$ should be equal to the radius $r$. The distance formula is $\sqrt{(x - 4)^2 + (y - 39)^2} = r$.} But since $y$ is expressed in terms of $x$, I can substitute that into the distance formula. \ldots\ To eliminate the square root, I can square both sides: $(x - 4)^2 + \left[\left((x^2 - 8x + 12)/2 - 39\right)\right]^2 = r^2$.

\ptthl{Now, this equation should have exactly one solution for $x$ because the circle and parabola are tangent at that point. Therefore, the equation must have exactly one real root. To find the condition for this, I need to simplify the equation and then find the discriminant of the resulting quadratic (or higher degree) equation in $x$.} \ldots\ So, rearranged: $x^4 - 16x^3 - 64x^2 + 1024x + (4420 - 4r^2) = 0$.

Now, this is a quartic equation in $x$. \ldots\ \ptthl{However, the solution outline says that the equation must have exactly one solution for $x$, which would correspond to the case where they are tangent.} However, perhaps the equation is a quartic, which can have multiple roots. But for tangency, maybe there is a multiple root? So, the quartic equation should have a multiple root, which would mean that the equation and its derivative both equal zero at that root. \pttdv{But the solution outline says that the discriminant of the resulting quadratic equation in $x$ must be zero. Wait, but the equation we have is quartic. Hmm, maybe I made a mistake here. Let me check again.} \ldots\ But that seems correct. However, the problem mentions that the discriminant of the resulting quadratic equation in $x$ must be zero. But here, we have a quartic equation. So maybe there's a miscalculation here. \pttdv{Alternatively, perhaps there is a different approach.} \ldots\ But how do I find the values of $r$ for which the quartic equation has at least one multiple root? \ldots\ Alternatively, maybe the equation we derived (the quartic) can be factored. Let me check if that's possible. \ldots\ But this seems complicated. Maybe there's a better way. Let me think again.

\pttdv{Alternatively, perhaps I can use calculus. If the circle and parabola are tangent, then they have a common point and their tangent lines at that point are the same. So, maybe I can set up the system of equations for the circle and the parabola, and then impose that their derivatives are equal at the point of intersection. Let me try that approach.} \ldots\ $dy/dx = [2x - 8]/2 = x - 4$. Now, find the derivative of the circle implicitly. \ldots\ $dy/dx = -(x - 4)/(y - 39)$. At the point of tangency, the derivatives must be equal, so: $x - 4 = -(x - 4)/(y - 39)$. Assuming that $x \neq 4$ (if $x = 4$, then the derivative of the parabola is 0, and we can check separately). So, if $x \neq 4$, then we can divide both sides by $(x - 4)$: $1 = -1/(y - 39)$. Therefore, $y - 39 = -1 \Rightarrow y = 38$. \ldots\ But $x$ is $4 \pm 4\sqrt5$, so $(x - 4)^2$ is $(\pm 4\sqrt5)^2 = 16 \cdot 5 = 80$. Therefore, the distance is $\sqrt{80 + 1} = \sqrt{81} = 9$. Therefore, $r = 9$. \ldots\ If $x = 4$, then from the parabola equation, $y = (16 - 32 + 12)/2 = (-4)/2 = -2$. So, the point is $(4, -2)$. The distance from $(4, 39)$ to $(4, -2)$ is $\sqrt{0 + (39 - (-2))^2} = \sqrt{41^2} = 41$. Therefore, $r = 41$. \ldots\ Therefore, there are two values of $r$: 9 and 41. Their sum is $9 + 41 = 50$. So, the answer should be 50.

\pttdv{But wait, why did the earlier approach with the quartic equation not lead me to this? Because when I derived the quartic equation, I considered that for tangency, the equation has exactly one solution. However, in reality, there are two different radii: one corresponding to the point $(4, -2)$ and another corresponding to the points $(4 \pm 4\sqrt5, 38)$.} \ldots

\textbf{Final answer:} $50$. \quad \textbf{Correct:} \pttok}

\vspace{-0.5em}
\caption{\textbf{Qualitative example of outline adherence and drift}, from PTTS-RL with Qwen3-4B. Reasoning parts that explicitly follow the outline are marked in \ptthl{blue}, while parts that deviate are marked in \pttdv{orange}. The executor initially follows the outline but deviates when the outlined approach fails, eventually reaching the correct answer. Such recovery can assign positive reward to a poor outline, weakening the training signal, as discussed in \S\ref{sec:method:method}.}
\label{fig:qualitative_adherence_drift}
\end{figure}

\section{Prompts}

\renewcommand{\lstlistingname}{Prompt}

This section presents the prompts used for planning, execution, and analysis, including the PTTS planner prompt in Prompt~\ref{prompt:planner}, the executor prompt in Prompt~\ref{prompt:executor}, the outline adherence evaluation prompt in Prompt~\ref{prompt:outline-following}, and the concept extraction and clustering prompts in Prompts~\ref{prompt:concept-extraction} and~\ref{prompt:concept-clustering}.

\begin{lstlisting}[
    style=promptstyle,
    caption={Prompt used by the PTTS planner.},
    captionpos=b,
    label={prompt:planner}
]
[SYSTEM]

You are an annotator tasked with generating multiple high-level solution outlines for a math problem.

Your goal is to explore different perspectives, strategies, or conceptual approaches that could be used to solve the problem. Based on this, produce {num_suboutlines} distinct outlines that could independently guide a solver from start to finish.
* You must NOT solve the problem.
* You must NOT compute values, simplify expressions, or use algebra.
* Any calculation makes the output invalid.

Format output as a numbered list (1. - {num_suboutlines}.), where each item is an outline.

[USER]

{question}
\end{lstlisting}

\begin{lstlisting}[
    style=promptstyle,
    caption={Prompt used by the PTTS executor.},
    captionpos=b,
    label={prompt:executor}
]
[SYSTEM]

You are a math problem solver. You are given a math problem and a solution outline. Follow the outline carefully to solve the problem step by step. Show your work and put your final answer in \boxed{}.

[USER]

Problem:
{question}

Solution Outline:
{outline}

Now solve the problem by following this outline:
\end{lstlisting}

\begin{lstlisting}[
    style=promptstyle,
    caption={Prompt used to evaluate outline following.},
    captionpos=b,
    label={prompt:outline-following}
]
[SYSTEM]

You are an expert evaluator for mathematical reasoning and
outline-conditioned generation.

You will be given:
1. A math problem.
2. An outline provided to a model.
3. A solution response generated by the model conditioned on that outline.

Determine whether the solution response follows the provided outline.

Evaluation rules:
- Do not assign a high following score merely because the solution is correct.
- Do not assign a low following score merely because the solution is incorrect.
- A response follows an outline when its primary reasoning path uses the strategy, decomposition, or perspective described by the outline.
- A response may introduce additional details while still following the outline, provided that those details are consistent with the outlined approach.
- A response that uses a substantially different method should receive a low following score, even if it reaches the correct answer.
- If the outline or response is too unclear, malformed, or irrelevant to assess, use CANNOT_JUDGE.

Outline following rubric:
5 = FOLLOWS_EXACTLY: Clearly follows the intended strategy.
4 = MOSTLY_FOLLOWS: Mainly follows the outline but adds, skips, or changes minor steps.
3 = PARTIALLY_FOLLOWS: Uses some ideas from the outline but substantially deviates.
2 = DOES_NOT_FOLLOW: Uses a different strategy or largely ignores the outline.
1 = CANNOT_JUDGE: The outline or response is too unclear, malformed, or irrelevant to assess.

Return only valid JSON using the following schema:

{
  "outline_following_score": <integer from 1 to 5>,
  "outline_following_label":
    "<FOLLOWS_EXACTLY | MOSTLY_FOLLOWS | PARTIALLY_FOLLOWS |
      DOES_NOT_FOLLOW | CANNOT_JUDGE>",
  "outline_following_note": "<brief explanation>",
}

[USER]

[QUESTION]
{question}
[/QUESTION]

[OUTLINE]
{outline}
[/OUTLINE]

[OUTLINE_CONDITIONED_SOLUTION]
{response}
[/OUTLINE_CONDITIONED_SOLUTION]

Evaluate the outline-following behavior according to the system instructions. Return only valid JSON.
\end{lstlisting}

\begin{lstlisting}[
    style=promptstyle,
    caption={Prompt used to extract the pivotal concept from each executor response.},
    captionpos=b,
    label={prompt:concept-extraction}
]
You are given a math problem and a full solution response or outline generated by a model.

The response may be correct or incorrect.

Your task is to identify the SINGLE most important mathematical or logical concept, theorem, canonical formula, invariant, transformation, or strategy that makes the attempted solution or outline possible.

Important:
- Summarize the concept actually used or attempted in the response.
- Do NOT repair the response into a better concept.
- Do NOT introduce a concept that is not supported by the response.
- If the response is flawed, still identify the key concept the response attempted to use.
- If the response uses multiple concepts, choose the one without which the attempted solution would not work, usually the first pivotal step.
- Choose the narrowest concept that still covers the attempted solution.
  - Good: "Pythagorean Theorem", "Vieta's Formulas", "Modular Invariant", "Complement Counting".
  - Bad: "Geometry", "Algebra", "Number Theory", "Counting".

Definition of a valid key concept:
- It should be a specific mathematical concept, theorem, formula, invariant, transformation, or canonical strategy.
- It should explain the core mechanism of the attempted solution.
- It should not be a full outline.
- It should not include the final answer.
- It should not include detailed computations, equation chains, or numerical derivations.
- It should be specific enough to distinguish this solution path from other possible approaches.

Formatting rules:
- The concept name should use Title Case and singular form when possible.
- Prefer standard mathematical names when available.
- If no standard theorem applies, use a concise strategy label, such as "Casework on Remainders", "Symmetry Reduction", "Complement Counting", or "Bounding Argument".
- Do not use overly broad labels such as "Algebra", "Geometry", "Combinatorics", or "Number Theory" unless the response is too unclear to identify a narrower concept.

Return ONLY valid JSON in the following format:
{{
  "concept": "<single most important concept name>",
  "evidence": "<one short sentence explaining why this concept is pivotal in the attempted response>",
  "confidence": <integer from 1 to 5>
}}

Confidence score:
5 = clearly identifiable and specific concept.
4 = mostly clear concept, with minor ambiguity.
3 = plausible concept, but response uses several competing ideas or is partially unclear.
2 = weak guess because the response is flawed, vague, or inconsistent.
1 = cannot meaningfully identify a concept.

[QUESTION]
{question}
[/QUESTION]

[FULL_SOLUTION_RESPONSE]
{response}
[/FULL_SOLUTION_RESPONSE]
\end{lstlisting}

\begin{lstlisting}[
    style=promptstyle,
    caption={Prompt used to cluster response concepts for each problem.},
    captionpos=b,
    label={prompt:concept-clustering}
]
You are given a math problem and a list of candidate key concepts extracted from model-generated solution responses or outlines.

Some source responses may be correct and some may be incorrect.

Your task is to cluster the candidate concepts by the underlying mathematical or logical concept used in the attempted solution.

Clustering rules:
- Group concepts together if they refer to essentially the same pivotal mathematical idea, theorem, invariant, transformation, or canonical strategy.
- Merge synonymous or near-synonymous names.
  - Example: "Modulo Invariant", "Invariant Modulo 5", and "Residue Class Invariant" may belong together if they describe the same attempted idea.
- Separate concepts if they represent meaningfully different solution mechanisms, even if they are from the same broad field.
  - Example: "Vieta's Formulas" and "Discriminant Condition" should usually be separated.
  - Example: "Complement Counting" and "Inclusion-Exclusion Principle" should be separated unless the evidence shows they refer to the same pivotal step.
- Cluster by attempted concept, not by whether the source response was correct.
- A cluster may contain concepts extracted from both correct and incorrect source responses.
- Do NOT repair an incorrect or vague concept into a better one.
- Do NOT introduce a new concept that is not represented by the cluster members.
- Prefer the narrowest canonical concept name that covers the cluster.

For each cluster, write:
- "canonical_concept": the standard or clearest concept name.
- "member_indices": the indices of candidate concepts in this cluster.
- "canonical_description": one short sentence describing the shared concept.

Canonical concept rules:
- Use Title Case and singular form when possible.
- Prefer standard mathematical names when available.
- If no standard theorem applies, use a concise strategy label, such as "Casework on Remainders", "Symmetry Reduction", "Complement Counting", or "Bounding Argument".
- Do not use overly broad labels such as "Algebra", "Geometry", "Combinatorics", or "Number Theory" unless the cluster is genuinely too vague to identify a narrower concept.
- Do not include final answers, computations, equation chains, or problem-specific numerical values.

Return ONLY valid JSON in this exact schema:
{{
  "clusters": [
    {{
      "cluster_id": 1,
      "canonical_concept": "<Title Case concept name>",
      "member_indices": [1, 3],
      "canonical_description": "<one short sentence describing the shared concept>"
    }}
  ]
}}

Example:

[EXAMPLE_QUESTION]
Find the number of positive integers n satisfying a certain divisibility condition.
[/EXAMPLE_QUESTION]

[EXAMPLE_CANDIDATE_CONCEPTS]
1. Concept: Modular Arithmetic Filtering
   Evidence: The solution restricts possible values of n using congruences modulo small integers.
2. Concept: Casework on Parity
   Evidence: The solution splits n into odd and even cases and eliminates impossible cases.
3. Concept: Congruence Class Filtering
   Evidence: The solution uses residues modulo small primes to narrow the candidate values.
4. Concept: Exhaustive Enumeration
   Evidence: The solution checks all feasible candidates directly against the condition.
[/EXAMPLE_CANDIDATE_CONCEPTS]

[EXAMPLE_EXPECTED_OUTPUT]
{{
  "clusters": [
    {{
      "cluster_id": 1,
      "canonical_concept": "Modular Arithmetic Filtering",
      "member_indices": [1, 3],
      "canonical_description": "Use congruence constraints to narrow the possible values before checking the original condition."
    }},
    {{
      "cluster_id": 2,
      "canonical_concept": "Casework On Parity",
      "member_indices": [2],
      "canonical_description": "Split the variable into parity cases and analyze each case separately."
    }},
    {{
      "cluster_id": 3,
      "canonical_concept": "Exhaustive Enumeration",
      "member_indices": [4],
      "canonical_description": "Check all feasible candidates directly against the required condition."
    }}
  ]
}}
[/EXAMPLE_EXPECTED_OUTPUT]

Now cluster the actual candidate concepts below.

[QUESTION]
{question}
[/QUESTION]

[CANDIDATE_CONCEPTS]
{candidate_concepts}
[/CANDIDATE_CONCEPTS]
\end{lstlisting}

\section{AI Assistants In Research Or Writing}

We used AI assistants solely for stylistic improvements in writing, such as improving clarity, grammar, and phrasing. We did not use AI assistants for coding, brainstorming, research design, data analysis, interpretation of results, or any other critical intellectual contribution.

\end{document}